\documentclass[12pt]{article}

\usepackage{macros}

\usepackage{xcolor}

\usepackage{amsopn}

\usepackage{lmodern}
\usepackage{amsmath}

\usepackage{graphicx}

\usepackage{amssymb,amsmath}

\usepackage{relsize}

\usepackage{upgreek}

\usepackage{subfigure}

\usepackage{url}

\usepackage{graphicx}

\usepackage{algorithmic}
\usepackage{algorithm2e}

\pdfoutput=1
\usepackage{amsmath, amssymb}
\usepackage{graphicx,psfrag,epsf}
\usepackage{enumerate}
\usepackage{natbib}
\usepackage{url} 

\DeclareMathOperator{\argmin}{argmin}
\def\calX{\mathcal{X}}
\def\U{\mathbf{U}}

\def\V{\mathbf{V}}
\def\I{\mathbf{I}}
\def\D{\mathbf{D}}

\def\R{\mathbb{R}}
\def\x{\mathbf{x}}

\def\X{\mathbf{X}}

\def\w{\mathbf{w}}

\def\z{\mathbf{z}}
\def\btheta{\boldsymbol{\theta}}
\def\v{\mathbf{v}}

\def\0{\mathbf{0}}
\def\u{\mathbf{u}}
\def\fupper{\overline f}
\def\flower{\underline f}

\def\Fdx{F_{(\x)}}

\def\W{\mathbf{W}}

\newcommand{\nnn}[2]{X^{({#2})}_{#1}}

\newcommand{\nnnd}[2]{D_{{#1},({#2})}}

\newcommand{\nnnv}[2]{\mathbf{V}_{{#1},({#2})}}

\begin{document}

\def\spacingset#1{\renewcommand{\baselinestretch}%
{#1}\small\normalsize} \spacingset{1}


  \title{\bf Efficient Estimation of High Information Projections using Nearest Neighbours}
  \author{David Hofmeyr
  \hspace{.2cm}\\
    School of Mathematical Sciences, Lancaster University}
  \maketitle

\bigskip
\begin{abstract}
An intuitive method for dimensionality reduction is proposed, which is highly effective for finding interesting projections of multivariate data. Following similar intuitive motivation to a number of existing techniques, the proposed method is based on enhancing the nearest neighbour relationships in the data. The proposed projection arises from the spectral decomposition of a matrix designed to encode the local covariance structure in the data, where the local covariance at a point is captured by pairs of its nearest neighbours. We show that under standard regularity conditions this matrix is a consistent estimator of the so-called ``Density Information Matrix'' (DIM); a non-parametric analogue of the Fisher Information Matrix. Spectral decompositions of DIMs have been shown to be connected with the important problems of Independent Components Analysis and, in the supervised context, Sufficient Dimension Reduction. However, existing estimators of the DIM are computationally expensive to compute and only target the DIM of a surrogate density, which is proportional to the square of the true underlying density. 
In addition, we go on to explore the practical utility of our method in aiding the downstream tasks of cluster analysis and outlier detection.
\end{abstract}

\noindent%
{\it Keywords:} 
Dimension Reduction; Fisher Information; Entropy; Cluster Analysis; Outlier Detection
\spacingset{1.45} 
\section{Introduction}
\label{sec:intro}

At a high level the task of dimension reduction is to extract from a set of multivariate observations, a reduced representation which retains as well as possible the useful information/structure in the data. Structure may be represented at a global level, as in the ubiquitous Principal Components Analysis and its variants~\citep[PCA]{pca}; or at a more local level, e.g. as in those methods based on manifold learning, which typically rely on a graph-based representation of the data~\citep{lpp}. Another perspective on the problem is to operate at the level of the probability distribution of the reduced representation, where (after accounting for scale) low entropy distributions are seen to contain more interesting structure than high entropy ones~\citep{entropyPP}, with both multimodality and long-tailedness being associated with low entropy. 

In this paper we introduce a simple and intuitive linear dimensionality reduction technique which is based on enhancing the local structure in a set of data. Although similar in some respects to those based on a nearest neighbour graph constructed from the data, we take our motivation more from the information/entropy perspective. In particular, let $\calX := \{\x_1, ..., \x_n\}$ be a set of obervations in $\R^p$ and for $k < n$ let $\x^{(k)}_i$ be the $k$-th nearest neighbour of $\x_i; i \in [n]$~\footnote{We have used $[n]$ to be the first $n$ natural numbers, i.e. $[n] = \{1, ..., n\}$}. Then define
\begin{align}
    I(\calX, k) = \frac{p}{2n}\sum_{i=1}^n\left(\left(\frac{1}{\nnnd{\x_i}{k}^{2}} + \frac{1}{\nnnd{\x_i}{k+1}^{2}}\right)\I - \frac{2p}{\nnnd{\x_i}{k}^2\nnnd{\x_i}{k+1}^2}\nnnv{\x_i}{k}\right),
\end{align}
where $\nnnd{\x_i}{k} := ||\x_i - \x_i^{(k)}||$ is the $k$-th neighbour distance of $\x_i$ and $\nnnv{\x_i}{k} = \frac{1}{2}(\x_i^{(k)}-\x_i^{(k+1)})(\x_i^{(k)}-\x_i^{(k+1)})'$ represents a coarse estimate of the local covariance at $\x_i$. At its essence our approach may be seen from the point of view of projecting $\calX$ on the leading eigenvectors of $I(\calX, k)\Sigma_\calX$, where $\Sigma_\calX$ is the covariance of $\calX$. Although we make a few adjustments to this, which we describe in detail in Section~\ref{sec:practicalities}\footnote{We deviate from this by (i) averaging $I(\calX, k)$ over multiple values of $k$ to reduce variance; (ii) thresholding the eigenvalues based on theoretical asymptotic lower bounds; and (iii) orthogonalising the projection for better practical utility on downstream tasks.}, it is worthwhile first considering this formulation as it offers immediate intuitive motivation.

%
%
Notice that $I(\calX, k)$ may be expressed as $\alpha \I - \sum_{i=1}^n \beta_i \nnnv{\x_i}{k}$ for some $\alpha, \beta_i; i \in [n]$. The leading eigenvectors of $I(\calX, k)\Sigma_\calX$ may therefore be seen to represent projections which lead to tighter/more compact local covariance relative to the overall covariance of the projected data. Moreover, notice that the coefficients $\beta_i \propto 1/\nnnd{\x_i}{k}^2\nnnd{\x_i}{k+1}^2; i \in [n],$ will emphasise this compactness around points whose neighbour distances are smaller, or equivalently those at which the density of points is generally higher. In other words, projecting onto the leading eigenvectors of $I(\calX, k)\Sigma_\calX$ will tend to lead to especially compact/high density regions, relative to the overall scale of the projected data, and in such a way which emphasises the already higher density points, leading to ``peaky'' (low entropy) distributions. In addition, as we show in the remainder, $I(\calX, k)$ is a consistent estimator of the Density Information Matrix~\citep[DIM]{DIM}; a nonparametric analogue of the Fisher Information Matrix. Spectral decompositions of DIMs have been connected with the task of Independent Components Analysis (ICA), which further links the proposed approach with entropy minimisation since ICA may be formulated as a minimum entropy task~\citep{entropyPP}.

The DIM possesses some remarkable properties at a population level, but is very challenging to estimate. Indeed, as far as we are aware our proposal is the first direct estimator of the DIM, where existing approaches only target the DIM of a surrogate density which is proportional to the square of the data density~\citep{DIM0, DIM, CIM}. Although it has been shown that the spectral decompositions of the DIMs from both the true and surrogate densities are similar in many respects, we have found far better practical utility in the proposed formulation, especially on high dimensional applications. We show two examples in Figure~\ref{fig:tasters}. 
%
The first is a data set commonly used to assess the performance of clustering models; the Multiple Features Database~\citep{mfdigits}. Combining all five associated data sets, these data comprise a total of 649 features derived from 2000 images of handwritten digits taken from Dutch utility maps. The second is the Election2005 data set, taken from the {\tt R} package {\tt OutliersO3}~\citep{OutliersO3}. The data correspond with details from two German elections, from 2002 and 2005. We used all 66 numerical features, and have highlighted in the plots the ten points most frequently identified as outliers as described in the package vignette\footnote{\url{https://cran.r-project.org/web/packages/OutliersO3/vignettes/DrawingO3plots.html}}.


\begin{figure}
    \centering
    \subfigure[Multiple Features Database]{\includegraphics[width=\linewidth]{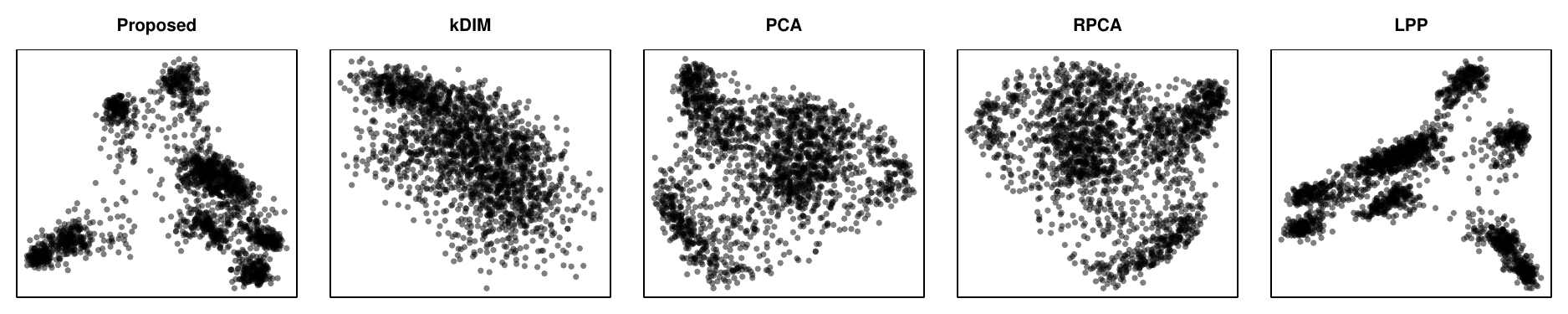}\label{fig:tasters1}}
    \subfigure[Election2005]{\includegraphics[width=\linewidth]{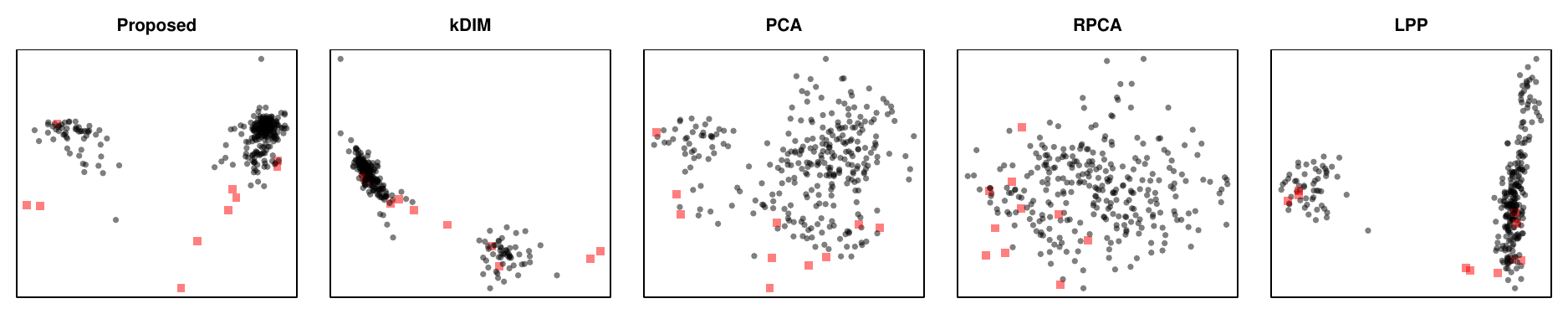}\label{fig:tasters2}}
    \caption{Two dimensional projections of (a) the Multiple Features Database; and (b) the Election2005 data set with potential outliers highlighted.}
    \label{fig:tasters}
\end{figure}

In each case the plots show two dimensional projections of the data from (i) the proposed approach; (ii) the method of~\cite{DIM}, which projects the data onto the eigenvectors of $I(\hat f^*)\Sigma_{\cal X}$, where $I(\hat f^*)$ is a kernel based estimate of the DIM of the surrogate density mentioned above; (iii) PCA; (iv) a robust variant of PCA~\citep[RPCA]{robpca}; and (v) Locality Preserving Projections~\citep[LPP]{lpp}.
PCA is arguably the most well recognised dimension reduction technique and is based on finding an orthogonal projection which maximises the variance of the projected data. The robust variant used here instead finds the projection which maximises the variance of the projections of the least outlying points. Finally, LPP is a well recognised graph based method, and is a linear approximation of the Laplacian eigenmap.


In the case of the Multiple Features data set, Figure~\ref{fig:tasters1}, both LPP and the proposed approach show pleasing separation of numerous clearly defined clusters. PCA shows some evidence of the presence of clusters but with much less clear separation, and the projection found with RPCA is almost identical up to a reflection in the vertical axis. The DIM based on the method of~\cite{DIM} shows very little structure by comparison. 

On the other hand, when applied to the Election2005 data set, Figure~\ref{fig:tasters2}, all methods place multiple of the potential outliers beyond or near the periphery of the higher density regions, but only the DIM based approaches (the proposed and kDIM methods) separate some of these well away from the remaining data and arguably the proposed approach achieves this more clearly than kDIM.\\
\\
The remainder of the paper is organised as follows. In the next section we briefly review some of the theoretical properties of the DIM at a population level, while in Section~\ref{sec:theory} we explore the theoretical properties of $I(\calX, k)$ as an estimator for the DIM. In Section~\ref{sec:practicalities} we describe some of the practicalities associated with our methodology, and we go on, in Section~\ref{sec:experiments}, to further document its practical utility in experiments using popular data sets in the public domain. We then conclude the paper with a brief discussion, in Section~\ref{sec:conclusions}.



\section{The Density Information Matrix}\label{sec:dim}

The Density Information Matrix (DIM), described by~\cite{DIM0, DIM}; but only named as the DIM later on~\citep{CIM}, may be seen as a non-parametric analogue of the Fisher Information Matrix (FIM), well known for its relevance in the theory of maximum likelihood estimation. If $X$ is a random variable on $\R^p$, with density $f_X$ parameterised by a $\btheta \in \R^q$, then the FIM is defined as
\begin{align}
    I(\btheta) &:= -E\left[\nabla^2_{\btheta} \log\left(f_X(X|\btheta)\right)\right] = E\left[\frac{1}{f_X(X|\btheta)^2}\nabla_{\btheta} f_X(X|\btheta)\nabla_{\btheta} f_X(X|\btheta)'\right],\label{eq:FIM}
\end{align}
with the equality above holding under the standard regularity conditions of maximum likelihood theory. Here we have used $\nabla_{\btheta}$ to be the gradient operator (with respect to $\btheta$), i.e., $\nabla_{\btheta}f_X(\x|\btheta) = (\frac{\partial}{\partial \theta_1}f_X(\x|\btheta), ..., \frac{\partial}{\partial \theta_q}f_X(\x|\btheta))$, and similarly $\nabla^2_{\btheta}$ the corresponding Hessian operator, i.e., the matrix of second partial derivatives; $\nabla^2_{\btheta}f_X(\x|\btheta)_{i,j} = \frac{\partial^2}{\partial \theta_i\partial\theta_j}f_X(\x|\btheta)$. At an intuitive level the FIM may be interpreted as the average amount of information a realisation of $X$ carries about the parameter $\btheta$, and hence about its distribution %
(since its density is characterised by the value of $\btheta$). This offers pleasing interpretation in the context of maximum likelihood estimation, where $I(\btheta)^{-1}$ represents the (asymptotic) variance of the maximum likelihood estimator for $\btheta$; and any natural interpretation of $X$ being ``more informative about $\btheta$'' ought to be consistent with a ``standard'' estimator for $\btheta$ having lower variance.

In the non-parametric context, where no parametric form of $f_X$ is assumed, the DIM is defined as
\begin{align}
    I(f_X) := E\left[\frac{1}{f_X(X)^2}\nabla f_X(X)\nabla f_X(X)'\right],\label{eq:DIM}
\end{align}
where here the gradient is taken with respect to the argument of $f_X$, i.e., $\nabla f_X(\x) = \left(\frac{\partial}{\partial x_1}f_X(\x), ..., \frac{\partial}{\partial x_p}f_X(\x)\right)$. Transferring the intuitive interpretation of the FIM to the non-parametric context, the DIM may be seen as representing the average information which a realisation of $X$ carries about its density function. Minimal thought can thus lead us to an intuitive connection between the elements in $I(f_X)$ and the dependence structure in the components of $X$. Specifically, if $X_i$ provides information about the distribution of $X_j$, where $i, j \in [p]$, then $X_i$ and $X_j$ are not independent. Indeed it has been shown that $I(f_X)_{i,j} \not = 0$ implies that $X_i$ and $X_j$ are not independent~\citep{DIM}. In fact the properties of $I(f_X)$, in relation to the dependence structure in the components of $X$, go further and include potential diagnostics for conditional independence among groups of components~\citep{DIM}. As such the DIM acts similarly to the precision matrix of a multivariate Gaussian density, and it is straightforward to show that if $X$ has a multivariate Gaussian distribution then its DIM and precision matrix are equal.\\
\\
It has also been shown that, after accounting for scale, the Gaussian densities have the minimum information, and in particular $I(f_X) \succcurlyeq \Sigma_X^{-1}$,
with equality holding if and only if $X$ has a Gaussian distribution~\citep{DIM0}. From the point of view of dimension reduction, then, projecting $X$ onto the leading eigenvectors of $I(f_X)\Sigma_X$ may be seen to maximise departure from Gaussianity. In fact the DIM was first used for the dual problem of finding so-called ``white noise subspaces'', i.e. subspaces within which the projection of $X$ has a Gaussian distribution, by considering the space spanned by the eigenvectors of $I(f_X)\Sigma_X$ associated with eigenvalues equal to one~\citep{DIM0}. Retaining only the projection of $X$ onto the orthogonal complement of this white noise subspace is thus seen to retain potentially interesting structure in the distribution of $X$.

This perspective also connects the DIM to the problem of ICA, since it is also known that, again after accounting for scale, the Gaussian densities have the greatest entropy. Indeed numerous methods for performing ICA focus on the problem of maximising ``non-Gaussianity'' instead of the more challenging problem of directly minimising entropy~\citep{fastICA}. Perhaps unsurprisingly, then, it was also shown that if $X$ satisfies the conditions of the independent components model~\citep{comonICA}, i.e., that $X = \mathbf{M}S$, where $S$ is a random variable on $\R^p$ whose components are mutually independent and $\mathbf{M} \in \R^{p\times p}$ is a non-singular matrix, then the spectral decomposition of $I(f_X)\Sigma_X$ recovers the independent components up to unknown scaling and permutation~\citep{DIM}.




\section{$I(\calX, k)$ as an Estimator of the DIM}\label{sec:theory}

In this section we explore the theoretical properties of $I(\calX, k)$, in relation to the DIM of the density underlying the data. To that end we consider a sequence of independent random variables $X_1, X_2, ...$, identically distributed to random variable $X$, with continuous distribution function $F_X$ admitting density function $f_X$. Then for $n \in \mathbb{N}$ we let $\X_n := \{X_1, ..., X_n\}$ be the set containing the first $n$ elements of the sequence, and for $i \in [n]; k \in [n-1]$ let $X_{i,n}^{(k)}$ be the $k$-th nearest neighbour of $X_i$ from among $\X_n$, and let $\nnnd{i,n}{k} = ||X_i-X_{i,n}^{(k)}||$ be the corresponding $k$-th neighbour distance. Our main result is then stated in the following theorem, where we have used the notation $B_h(\x) = \{\z \in \R^p | ||\x-\z||\leq h\}$ to be the ball of radius $h$ centered on $\x$, and for a smooth measureable set $V \subset \R^p$ we use $\oint_{V} g(\x)d\x$ to indicate the surface integral of the function $g$ over $\partial V$, the surface of $V$. We also use $a_0$ and $v_0$ to be the surface area and volume of the unit ball, i.e. $a_0 = \oint_{B_1(\0)}1 d\x$ and $v_0 = \int_{B_1(\0)}1 d\x$, in $\R^p$. In addition we use $\Fdx$ to denote the distribution function of the random variable $||X-\x||$.

\begin{theorem}\label{thm:conv}
    Let $X_1, X_2, ...$ be a sequence of i.i.d. random variables on $\R^p; p > 4$, with density function $f_X$ satisfying
    \begin{itemize}
    \item[C1:] The support of $f_X$ is bounded, i.e. $\exists M > 0$ s.t. $Supp(f_X):= \{\x \in \R^p | f_X(\x) > 0\} \subset B_M(\0)$.
    \item[C2:] The density $f_X$ has bounded first two derivatives on the interior of $Supp(f_X)$.
    \item[C3:] There exist $l, H > 0$ s.t.
    \begin{align*}
        \sup_{\substack{\x \in Supp(f_X)\\0 < h_1 \leq h_2 \leq H}} \frac{f_X(\x)^2h_1^{p-1}h_2^{p-1}h_2^3}{\oint_{B_{h_1}(\x)}\oint_{B_{h_2}(\x)}f_X(\z)f_X(\w)d\z d\w} \leq l
    \end{align*}
    \item[C4:] For any $a > 0$ $\exists U > 0$ for which
    \begin{align*}
        \sup_{\substack{\x \in Supp(f_X),\\ 0 < u < U}} \frac{1}{\Fdx(u)^k} \int_1^\infty \Fdx( u\epsilon^{-1/a})^k d\epsilon = o(1),
    \end{align*}
    as $k \to \infty$.
\end{itemize}
Then
\begin{align*}
    I(\X_n, k(n)) \xrightarrow{P} I(f_X)
\end{align*}
provided the sequence $\{k(n)\}_{n \in \mathbb{N}}$ satisfies $k(n) \to \infty$ and $k(n)^{2-p/4}/n^{1-p/4} \to 0$ as $n\to\infty$.
\end{theorem}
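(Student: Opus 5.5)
The plan is to prove convergence in mean square. We show that $E[I(\X_n,k(n))]\to I(f_X)$ and that $\mathrm{Var}(I(\X_n,k(n)))_{j,l}\to 0$ entrywise; Chebyshev's inequality then gives convergence in probability. By exchangeability, $E[I(\X_n,k)]$ is the expectation of a single summand, namely the term for $X_1$. So the bias analysis reduces to conditioning on $X_1=\x$ and computing the conditional expectation of
\[
\frac{p}{2}\left(\Big(\tfrac{1}{\nnnd{1,n}{k}^{2}}+\tfrac{1}{\nnnd{1,n}{k+1}^{2}}\Big)\I-\tfrac{2p}{\nnnd{1,n}{k}^{2}\nnnd{1,n}{k+1}^{2}}\nnnv{1,n}{k}\right).
\]
The key structural fact is the standard one for nearest neighbours. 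Given $\nnnd{1,n}{k}=h_1$ and $\nnnd{1,n}{k+1}=h_2$, the points $X^{(k)}_{1,n}$ and $X^{(k+1)}_{1,n}$ are independent. They are distributed on the spheres $\partial B_{h_1}(\x)$ and $\partial B_{h_2}(\x)$ with surface densities proportional to $f_X$, i.e. $f_X(\z)/\oint_{B_{h_1}(\x)}f_X$ and $f_X(\w)/\oint_{B_{h_2}(\x)}f_X$.

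\textbf{Step 1 (conditional expectation given the radii).} Compute $E[\nnnv{1,n}{k}\mid h_1,h_2,\x]$. Write $\z=\x+h_1\u$ and $\w=\x+h_2\v$, then Taylor expand $f_X$ to second order using C2. This gives
\[
\tfrac12 E[(\z-\w)(\z-\w)'] = \tfrac{h_1^2+h_2^2}{2p}\I - \tfrac{h_1^2h_2^2}{p^2}\frac{\nabla f_X(\x)\nabla f_X(\x)'}{f_X(\x)^2} + (\text{second-order terms}) + R.
\]
The cross term $-E[(\z-\x)(\w-\x)']$ produces the gradient outer product, since $E[h_1\u]\approx \frac{h_1^2}{p}\nabla f_X/f_X$. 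The sphere terms $E[h^2\u\u']=\frac{h^2}{p}\I+O(h^4)$ produce the isotropic part. Multiplying by $\frac{2p}{h_1^2h_2^2}$ and subtracting from $(h_1^{-2}+h_2^{-2})\I$ cancels the identity terms exactly; this is the reason for the form of the estimator. What remains is $\frac{2}{p}\frac{\nabla f\nabla f'}{f^2}$ times $\frac{p}{2}$, which is the DIM integrand, plus a remainder. Controlling the remainder requires the ratios of the integrals over the spheres to $f_X(\x)$. Condition C3 bounds exactly the normalising denominators $\oint\oint f f$ from below by $f_X(\x)^2h_1^{p-1}h_2^{p-1}h_2^3/l$. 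Hence the remainder after division is $O(h_2)$ times powers of $h_1^{-1},h_2^{-1}$ that are bounded by $O(h_2/h_1^{\,\cdot})$-type quantities. With care, the remainder should be bounded by something like $C h_2^{a}/h_1^{b}$.

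\textbf{Step 2 (unconditioning the radii).} One must show $E[\text{remainder}]\to0$. This requires $\nnnd{1,n}{k}\to0$ with $k/n\to0$, together with control of $E[\nnnd{1,n}{k}^{-m}]$ for the negative moments appearing. This is the role of C4: via $E[D^{-m}]=\int_0^\infty P(D^{-m}>t)dt$ and the binomial form of $P(\nnnd{1,n}{k}<u)$, C4 with $a=m$ shows these negative moments are asymptotically those of $\Fdx^{-1}$ of a Beta$(k,n-k)$ variable. That is, they are of order $(k/(nf_X(\x)v_0))^{-m/p}$, uniformly in $\x$ by C1 and the supremum. Ratios such as $h_2^a/h_1^b$ are then $O((k/n)^{(a-b)/p})\to0$. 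Dominated convergence over $\x$, using bounded support, gives $E[I(\X_n,k)]\to I(f_X)$. Boundary points near $\partial Supp(f_X)$ contribute vanishing mass as $k/n\to0$.

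\textbf{Step 3 (variance).} The variance is $\frac1n\mathrm{Var}(T_1)+\frac{n-1}{n}\mathrm{Cov}(T_1,T_2)$, where $T_i$ is the $i$-th summand. The diagonal term needs $E\|T_1\|^2$. Each summand scales like $D^{-2}$, so $E\|T_1\|^2\sim E[D^{-4}]\sim (n/k)^{4/p}$. The fluctuation of $\nnnv{}{}$ about its conditional mean adds the dominant part: $h_1^2h_2^2/(h_1^2h_2^2)^2\cdot h^4$, again $\sim(n/k)^{4/p}$, relative to a $1/k$-type reduction when averaging. Combining these with the $1/n$ factor yields a bound of order $k^{2-p/4}/n^{1-p/4}$ after the bookkeeping. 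This is the step where the rate condition enters and where $p>4$ is needed to make the relevant moments finite. The covariance term is handled by the standard decoupling argument: $T_1$ and $T_2$ are asymptotically independent unless $X_1,X_2$ lie within each other's $(k+1)$-neighbourhoods, an event of probability $O(k/n)$. On that event, Cauchy--Schwarz with the second moments above gives a vanishing contribution.

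The main obstacle is Step 3's exact moment accounting, in particular showing that the covariance between summands vanishes at the required rate. I would first try a Poissonisation or coupling argument that replaces $\X_n$ with an independent resample outside small balls.
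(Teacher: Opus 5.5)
You have the paper's architecture (condition on the two radii, Taylor-expand on the spheres, then show bias and variance vanish), but Steps 1--2 contain a genuine gap.

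You write $E[h^2\u\u']=\frac{h^2}{p}\I+O(h^4)$ and then treat everything left after the identity terms cancel as a remainder of the form $h_2^a/h_1^b$ with vanishing expectation. That $O(h^4)$ term is not negligible. It equals $h^4f_X(\x)^{-1}$ times a fixed linear combination of $\nabla^2 f_X(\x)$ and $\mathrm{tr}(\nabla^2 f_X(\x))\I$, coming from the quadratic Taylor term and the normalising constant. After division by $h_1^2h_2^2$ it leaves a term of size $(h_1^2/h_2^2+h_2^2/h_1^2)\,\|\nabla^2 f_X(\x)\|/f_X(\x)$, which is $O(1)$, not $o(1)$ (compare Eq.~(\ref{eq:at_x})).

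Hence the conditional mean of a summand given $X_1=\x$ does \emph{not} converge to $f_X(\x)^{-2}\nabla f_X(\x)\nabla f_X(\x)'$. Your pointwise-limit-plus-dominated-convergence step would therefore establish a false statement.

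The missing idea is that these Hessian terms cancel only after averaging over $X_1$. This holds because $E[f_X(X)^{-1}\nabla^2 f_X(X)]=\int\nabla^2 f_X(\x)\,d\x=\0$ under the paper's regularity assumptions. Their coefficient $\nnnd{\x,n-1}{k}^2/\nnnd{\x,n-1}{k+1}^2+\nnnd{\x,n-1}{k+1}^2/\nnnd{\x,n-1}{k}^2$ depends on $\x$. So you must show its expectation tends to $2$ \emph{uniformly} in $\x$; otherwise the $\x$-dependence of the weight can destroy the cancellation. That is Lemma~\ref{lem:nnratio}, proved from the order-statistic representation. It is where C4 and $k(n)\to\infty$ are actually used. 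The negative moments you attribute to C4 need only upper bounds, and these follow from $\Fdx(u)\le v_0\fupper u^p$ and Gautschi's inequality.

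Step 3 is also left open at its only hard point. ``Asymptotically independent unless mutual neighbours'' does not control $\mathrm{Cov}(T_1,T_2)$, because the two summands share the other $n-2$ sample points. The paper avoids covariances altogether with an Efron--Stein-type inequality for $k$-NN statistics (Lemma 4.6 of~\cite{Dafydd}):
\begin{align*}
\mathrm{Var}(I(\X_n,k)_{i,j})\le\frac{2(n+1)(3+8k^2pv_0)}{n^2}E[\Phi^2], \qquad |\Phi|\lesssim\nnnd{X,n-1}{k}^{-2}.
\end{align*}
Combined with $E[\nnnd{\x,n-1}{k}^{-4}]\le(v_0\fupper)^{4/p}((n+1)/k)^{4/p}$, this gives a variance of order $k^{2-4/p}/n^{1-4/p}$, and that is the condition actually needed. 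A variance bound of order $k^{2-p/4}/n^{1-p/4}$, as you claim your bookkeeping yields, would be useless: that quantity diverges for every $p>4$ and $1\le k\le n$. The exponent in the statement must be read as $4/p$.
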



\paragraph{Assumptions for Convergence}

Conditions C1 and C2 in the statement of Theorem~\ref{thm:conv} are very standard, however we have stated C3 and C4 in a form which is immediately useful in the proof of our result.  Nonetheless it is worthwhile briefly discussing these.

Note that for small $h_1, h_2$ we have $\oint_{B_{h_1}(\x)}\oint_{B_{h_2}(\x)}f_X(\z)f_X(\w)d\z d\w \approx a_0^2 f_X(\x)^2 h_1^{p-1}h_2^{p-1}$ and so C3 holds as long as there is an $H > 0$ independent of $\x$ for which this approximation holds reasonably well over $0 < h_1\leq h_2\leq H$. Note that the additional factor of $h_2^3$ adds an additional margin for this approximation.
On the other hand C4 essentially states that the distribution functions of the random variables $\{||X - \x|| \big| \x \in Supp(f_X)\}$ behave ``nicely'' for arguments close to zero. For example if there is a universal $U > 0$ for which all $\{\Fdx|\x \in Supp(f_X)\}$ are convex on the interval $[0, U]$ then we would have, for all $u \in (0, U)$, $\x \in Supp(f_X)$ and $\epsilon \geq 1$ that
\begin{align*}
    \Fdx(u\epsilon^{-1/a}) &\leq \Fdx(u)\epsilon^{-1/a}\\
    \Rightarrow \frac{1}{\Fdx(u)^k}\int_1^\infty \Fdx(u\epsilon^{-1/a})^k d\epsilon &\leq \int_1^\infty \epsilon^{-k/a}d\epsilon\\
    & = O(1/k),
\end{align*}
and so then this condition would hold strongly. Moreover if the density is smooth then we have for small $u$ that $\Fdx(u) \approx v_0 f_X(\x) u^p$, which is highly convex on the positive real numbers for even moderate $p$, and so the question would be whether or not there is an interval which is independent of $\x$ over which this approximation is close enough to maintain convexity.\\
\\
Perhaps more clearly, what is a far more common condition for nearest neighbours based estimation, in place of C3 and C4, is that the density is bounded away from zero on its support and that the support itself is relatively smooth. We show in the appendix that these conditions (along with C1 and C2) imply both C3 and C4. It is unclear to us, however, how much of a relaxation C3 and C4 is compared with the more common assumptions .\\
\\
The rest of this section constitutes a proof of Theorem~\ref{thm:conv}.

\subsection{Estimating $\frac{1}{f_X(\x)^2}\nabla f_X(\x)\nabla f_X(\x)'$ for Fixed $\x$}

As is common when studying the properties of nearest neighbours based estimators we consider first a fixed $\x$ in relation to a sample of size $n-1$ from $f_X$. For a given $\x \in \R^p$ define
    \begin{align*}
        \nnn{\x,n-1}{1}&= \argmin_{\z \in \X_{n-1}} ||\x - \z||,\\
        \nnn{\x,n-1}{k}&= \argmin_{\z \in \X_{n-1}\setminus \{\nnn{\x,n-1}{1}, ..., \nnn{\x,n-1}{k-1}\}} ||\x - \z||; \ k = 2, ..., n-1,
    \end{align*}
to be the ordering of the sample $\X_{n-1}$ based on their distances from $\x$, and let $\nnnd{\x,n-1}{k}:= ||\x - \nnn{\x,n-1}{k}||; k \in [n-1]$.
%
%
The usefulness of considering a fixed $\x$ is that the distribution of the random variable $g(\x, k, \X_{n-1}) := G(\x, \nnn{\x,n-1}{1}, ..., \nnn{\x,n-1}{k})$, where $G$ is some measureable function on $(\R^p)^{k+1}$, is equivalent to that of the conditional distribution of $g(X_i, k, \X_n\setminus \{X_i\})$, given $X_i = \x$, for any $i$.

Now, returning to fixed $\x$, let $\{j^*\}_{j\in [n-1]}$ be the rank of the distances in $\X_{n-1}$ from $\x$ so that $X_j = \nnn{\x,n-1}{j^*}$.
%
%
In the coming arguments we will use the fact that all dependence between the the elements $(\nnn{\x,n-1}{1}, ..., \nnn{\x,n-1}{n-1})$ is captured entirely through the relative ordering in their distances from $\x$. This means that $\nnn{\x, n-1}{k}$ is conditionally independent of $\X_{\x,n-1}^{(-k)}:= \{\nnn{\x,n-1}{1}, ..., \nnn{\x,n-1}{k-1}, \nnn{\x,n-1}{k+1}, \nnn{\x,n-1}{n-1}\}$, given $\nnnd{\x,n-1}{k}$; since the only information about $\nnn{\x,n-1}{k}$ which is carried by $\X_{\x,n-1}^{(-k)}$ is that $\nnnd{\x,n-1}{k-1} \leq \nnnd{\x,n-1}{k} \leq \nnnd{\x,n-1}{k+1}$ and this information is redundant, given $\nnnd{\x,n-1}{k}$. It also means that, conditional on its distance from $\x$, each $X_j$ is independent of the rank of its distance, since
\begin{align}
\nonumber
    X_j \big| (||X_j -\x||=h) &\overset{D}{=} X_j \big| (||X_j-\x||=h, \X_{-j})\\
\nonumber    &\overset{D}{=} X_j \big| (||X_j - \x||=h, \X_{-j}, j^*)\\
\nonumber    &\overset{D}{=} X_j \big| (||X_j - \x||=h, \X^{(-j^*)}, j^*)\\
    &\overset{D}{=} X_j \big| (||X_j - \x||=h, j^*).\label{eq:rank_independence}
\end{align}
In other words, the distributions of the nearest neighbours only differ by how the distributions of $\nnnd{\x,n-1}{j}; j\in[n-1]$, assign weights to the conditional distributions of the prototype $X \sim F_X$, given $||X - \x||$. Specifically, using $f_Z$ to denote the density of an arbitrary continuous random variable $Z$, we have
%
%
\begin{align*}
    f_{\nnn{\x,n-1}{j}}(\cdot) &= \int_0^\infty f_{\nnn{\x,n-1}{j}}(\cdot|\nnnd{\x,n-1}{j} = h)f_{\nnnd{\x,n-1}{j}}(h) dh\\
    &= \int_0^\infty f_X\left(\cdot\big | ||X -\x|| = h\right)f_{\nnnd{\x,n-1}{j}}(h) dh.
\end{align*}
With this understanding we are now in a position to establish that for $k$ small enough that $\nnnd{\x,n-1}{k}$ and $\nnnd{\x,n-1}{k+1}$ are small on average, we have
\begin{align}\label{eq:pointwise_approx}\nonumber
    E\Bigg[\frac{1}{\nnnd{\x,n-1}{k}^2\nnnd{\x,n-1}{k+1}^2}&(\nnn{\x,n-1}{k}-\nnn{\x,n-1}{k+1})(\nnn{\x,n-1}{k}-\nnn{\x,n-1}{k+1})'\Bigg]\\
    &\approx \frac{1}{p}E\left[\frac{1}{\nnnd{\x,n-1}{k}^2}+\frac{1}{\nnnd{\x,n-1}{k+1}^2}\right]\I - \frac{2}{p^2f_X(\x)^2} \nabla f_X(\x)\nabla f_X(\x)',
\end{align}
plus additional terms which depend on $f_X(\x)^{-1}\nabla^2 f_X(\x)$, where $\nabla^2 f_X(\x)$ is the Hessian of $f_X$ at $\x$. Towards elucidating the details, the importance of the conditional independence described previously is that we can address the left hand side in Eq.~(\ref{eq:pointwise_approx}) using the law of total expectation, where for $0<h_1 \leq h_2$, we have
\begin{align}
    \nonumber
    E&\Bigg[(\nnn{\x,n-1}{k}-\nnn{\x,n-1}{k+1})(\nnn{\x,n-1}{k}-\nnn{\x,n-1}{k+1})'\big|\nnnd{\x,n-1}{k}=h_1,\nnnd{\x,n-1}{k+1}=h_2\Bigg]\\
    \nonumber
    &= E\Bigg[\left(Z-W\right)\left(Z-W\right)'\bigg| ||Z-\x||=h_1,||W-\x||=h_2\Bigg],
\end{align}
where the expectation is over $Z, W$ being i.i.d. with distribution $F_X$. It is immediate then that this expectation may be written as
\begin{align*}
    \frac{1}{\oint_{B_{h_2}(\x)}\oint_{B_{h_1}(\x)}f_X(\z)f_X(\w)d\z d\w} \oint_{B_{h_2}(\x)}\oint_{B_{h_1}(\x)}(\z-\w)(\z-\w)'f_X(\z)f_X(\w)d\z d\w.
\end{align*}
Evaluating the numerator intergral, we find, using a standard Taylor expansion about~$\x$,
\begin{align*}
    \oint_{B_{h_2}(\x)}&\oint_{B_{h_1}(\x)}(\z-\w)(\z-\w)'f_X(\z)f_X(\w)d\z d\w = f_X(\x)^2\oint_{B_{h_2}(\0)}\oint_{B_{h_1}(\0)}(\z-\w)(\z-\w)'d\z d\w\\
    & + f_X(\x)\oint_{B_{h_2}(\0)}\oint_{B_{h_1}(\0)}(\z-\w)(\z-\w)'\nabla f_X(\x)'\z d\z d\w\\
    &+f_X(\x)\oint_{B_{h_2}(\0)}\oint_{B_{h_1}(\0)}(\z-\w)(\z-\w)'\nabla f_X(\x)'\w d\z d\w\\
    &+ \oint_{B_{h_2}(\0)}\oint_{B_{h_1}(\0)}(\z-\w)(\z-\w)'\nabla f_X(\x)'\w \nabla f_X(\x)'\z d\z d\w\\
    &+ \frac{1}{2}f_X(\x)\oint_{B_{h_2}(\0)}\oint_{B_{h_1}(\0)}(\z-\w)(\z-\w)'\left(\z'\nabla^2f_X(\x)\z + \w'\nabla^2f_X(\x)\w\right)d\z d\w\\
    &+ R_1,
\end{align*}
where since $f_X$ has a bounded second derivative we have $||R_1||_F < K_1 f_X(\x)(h_1+h_2)^2(h_1^2+h_2^2)h_1^{p-1}h_2^{p-1}$ for some $K_1>0$.
Now, using standard integration of polynomials on the unit sphere~\citep{polysphere} it is straightforward to verify that this is equal to
\begin{align*}
    a_0v_0&f_X(\x)^2h_1^{p-1}h_2^{p-1}(h_1^2+h_2^2)\I - 2v_0^2h_1^{p+1}h_2^{p+1}\nabla f_X(\x)\nabla f_X(\x)'\\
    & + \frac{f_X(\x)a_0^2}{p(p+2)}h_1^{p-1}h_2^{p-1}(h_1^4+h_2^4)\nabla^2 f_X(\x) + \frac{f_X(\x)a_0^2h_1^{p-1}h_2^{p-1}}{2p}\mathrm{tr}\left(\nabla^2 f_X(\x)\right)\left(\frac{h_1^4+h_2^4}{p+2}+\frac{h_1^2h_2^2}{p}\right)\I\\
    &+ R_1.
\end{align*}
Similarly, but far more straightforward, we have
\begin{align*}
    \oint_{B_{h_2}(\x)}\oint_{B_{h_1}(\x)}f_X(\z)f_X(\w)d\z d\w &= a_0^2f_X(\x)^2h_1^{p-1}h_2^{p-1} + R_2,
\end{align*}
where $|R_2| < K_2f_X(\x)(h_1^2+h_2^2)h_1^{p-1}h_2^{p-1}$ for some $K_2 > 0$.
Combining these we have
\begin{align*}
    E&\bigg[\frac{1}{\nnnd{\x,n-1}{k}^2 \nnnd{\x,n-1}{k+1}^2}(\nnn{\x,n-1}{k}-\nnn{\x,n-1}{k+1})(\nnn{\x,n-1}{k}-\nnn{\x,n-1}{k+1})'\big|\nnnd{\x,n-1}{k}=h_1,\nnnd{\x,n-1}{k+1}=h_2\bigg]\\
    &= \frac{v_0}{a_0}\left(\frac{1}{h_1^2} + \frac{1}{h_2^2}\right)\I - \frac{2v_0^2}{a_0^2f_X(\x)^2}\nabla f_X(\x)\nabla f_X(\x)' + R\\
    & \ + \frac{h_1^4+h_2^4}{p(p+2)f_X(\x)h_1^2h_2^2}\left(\nabla^2f_X(\x) + \frac{1}{2}\mathrm{tr}\left(\nabla^2 f_X(\x)\right)\I\right) + \frac{1}{2p^2f_X(\x)}\mathrm{tr}\left(\nabla^2 f_X(\x)\right)\I,
\end{align*}
where, 
\begin{align*}
    ||R||_F \leq K \frac{f_X(\x)h_2^4h_1^{p-1}h_2^{p-1}}{\oint_{B_{h_2}(\x)}\oint_{B_{h_1}(\x)}f_X(\z)f_X(\w)d\z d\w} \leq \frac{Klh_2}{f_X(\x)},
\end{align*}
for some $K > 0$ and where $l$ is given in Condition C3.

Integrating over the potential values of $h_1, h_2$ we thus have (noting that $v_0/a_0 = 1/p$),
\begin{align}\nonumber
    E&\bigg[\frac{1}{\nnnd{\x,n-1}{k}^2 \nnnd{\x,n-1}{k+1}^2}(\nnn{\x,n-1}{k}-\nnn{\x,n-1}{k+1})(\nnn{\x,n-1}{k}-\nnn{\x,n-1}{k+1})'\bigg]\\
    \nonumber
    &= \frac{1}{p}E\left[\frac{1}{\nnnd{\x, n-1}{k}^2} + \frac{1}{\nnnd{\x,n-1}{k+1}^2}\right]\I - \frac{2}{p^2f_X(\x)^2}\nabla f_X(\x)\nabla f_X(\x)'\\
    \nonumber
    & \ + \frac{1}{p(p+2)f_X(\x)}\left(\frac{\nnnd{\x, n-1}{k}^2}{\nnnd{\x, n-1}{k+1}^2}+\frac{\nnnd{\x, n-1}{k+1}^2}{\nnnd{\x, n-1}{k}^2}\right)\left(\nabla^2f_X(\x) + \frac{1}{2}\mathrm{tr}\left(\nabla^2 f_X(\x)\right)\I\right)\\
    & \ + \frac{1}{2p^2f_X(\x)}\mathrm{tr}\left(\nabla^2 f_X(\x)\right)\I + \tilde R, \label{eq:at_x}
\end{align}
where $||\tilde R||_F \leq \frac{Kl}{f_X(\x)}E[\nnnd{\x, n-1}{k+1}]$. Now, the reason for separating the terms involving $\frac{1}{f_X(\x)}\nabla^2 f_X(\x)$ is that the regularity conditions ensure that when taken over a random $X$, they contribute zero, i.e. $E[\frac{1}{f_X(X)}\nabla^2 f_X(X)] = \0$. It should be clear, however, that because of the dependence between the quantity $\frac{1}{f_X(X)}\nabla^2 f_X(X)$ and the nearest neighbour distances from $X$ this does not ensure that $E\left[\left(\frac{\nnnd{X, n-1}{k}^2}{\nnnd{X, n-1}{k+1}^2}+\frac{\nnnd{X, n-1}{k+1}^2}{\nnnd{X, n-1}{k}^2}\right)\frac{1}{f_X(X)}\nabla^2 f_X(X)\right]$, where the expectation is taken over $X, X_1, ..., X_{n-1} \sim_{i.i.d.} F_X$, is equal to $\0$. We address this in the following subsection.


%
%
%


\subsection{Averaging Over the Sample}

Using the expression in Eq.~(\ref{eq:at_x}), taken at each of the $X_i; i \in [n]$ and with neighbours taken from the rest of the sample, $\X_n$, we have
\begin{align}
    \nonumber
    E&\left[I(\X_n, k)\right]= I(f_X) + R^*\\
    &- \frac{p}{2(p+2)}E\left[\left(\frac{\nnnd{1,n}{k}^2}{\nnnd{1,n}{k+1}^2} + \frac{\nnnd{1,n}{k+1}^2}{\nnnd{1,n}{k}^2}\right)\frac{1}{f_X(X_1)}\left(\nabla^2 f_X(X_1) + \frac{1}{2}\mathrm{tr}\left(\nabla^2 f_X(X_1)\right)\right)\right], \label{eq:unbiased}
\end{align}
where $||R^*||_F \leq K^*E[\frac{\nnnd{1, n}{k+1}}{f_X(X_1)}]$ for some $K^* > 0$, and we have used the index ``1'' (i.e. the first sample point) arbitrarily, since each term in the sum defining $I(\X_n, k)$ has the same distribution. Now, it is well known that if $Supp(f_X)$ is bounded then $\sup_{\x \in Supp(f_X)}E\left[\nnnd{\x,n-1}{k+1}\right]$ vanishes as $n \to \infty$ and hence $E[\frac{\nnnd{1,n}{k+1}}{f_X(X_1)}] = \int_{Supp(f_X)}E[\nnnd{\x,n-1}{k+1}]d\x$ tends to zero as $n \to \infty$. In addition the following lemma, the proof of which may be found in the appendix, can be used to ensure that $E\left[\frac{\nnnd{\x,n-1}{k+1}}{\nnnd{\x,n-1}{k}} + \frac{\nnnd{\x,n-1}{k}}{\nnnd{\x,n-1}{k+1}}\right]$ converges to 2, uniformly in $\x$, as $n \to \infty$.

\begin{lemma}\label{lem:nnratio}
    Let $X_1, X_2, ...$ be i.i.d. random variables with density $f_X$ satisfying Conditions C1--C4. Let $0 < a \leq b$ and let $\{k(n)\}_{n\in \mathbb{N}}$ satisfy $\lim_{n \to \infty} k(n) = \infty$ and $\lim_{n\to\infty} k(n)/n = 0$. Then
    \begin{align*}
        \lim_{n \to \infty} \sup_{\x \in Supp(X)} \Bigg(E\left[\frac{\nnnd{\x,n}{k(n)+1}^b}{\nnnd{\x,n}{k(n)}^a}\right] - E\left[\nnnd{\x,n}{k(n)+1}^{b-a}\right]\Bigg) &= 0.
    \end{align*}
\end{lemma}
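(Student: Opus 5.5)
Write $D_j := \nnnd{\x,n}{j}$ and $k = k(n)$. The plan is to factor the integrand as
\begin{align*}
\frac{D_{k+1}^b}{D_k^a} = D_{k+1}^{b-a}\left(\frac{D_{k+1}}{D_k}\right)^a .
\end{align*}
The difference in the statement is then $E\big[D_{k+1}^{b-a}\big((D_{k+1}/D_k)^a - 1\big)\big]$. This is nonnegative because $D_k \leq D_{k+1}$. By C1 we have $D_{k+1} \leq 2M$ whenever $\x \in Supp(f_X)$, so $D_{k+1}^{b-a} \leq (2M)^{b-a}$. It therefore suffices to show
\begin{align*}
\sup_{\x} E\left[(D_{k+1}/D_k)^a\right] \to 1 .
\end{align*}

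\textbf{Conditioning step.} I will condition on $D_{k+1}=u$, using the same order-statistics reasoning as in the discussion around Eq.~(\ref{eq:rank_independence}). Given $D_{k+1}=u$, the distances $D_1,\dots,D_k$ are distributed as the order statistics of $k$ i.i.d.\ variables with distribution function $\Fdx(t)/\Fdx(u)$ on $[0,u]$. Hence $D_k$ is their maximum, and
\begin{align*}
P(D_k \leq t \mid D_{k+1}=u) = \left(\Fdx(t)/\Fdx(u)\right)^k .
\end{align*}
Writing the expectation of a nonnegative variable as the integral of its tail then gives
\begin{align*}
E\left[(u/D_k)^a \mid D_{k+1}=u\right] = 1 + \int_1^\infty P\left(D_k < u\epsilon^{-1/a} \mid D_{k+1}=u\right)d\epsilon = 1 + \frac{1}{\Fdx(u)^k}\int_1^\infty \Fdx(u\epsilon^{-1/a})^k d\epsilon .
\end{align*}
This is exactly the quantity controlled by Condition C4. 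With $U=U(a)$ from C4, the conditional excess is therefore $o(1)$ as $k\to\infty$, uniformly over $\x\in Supp(f_X)$ and $u<U$. Integrating over $u<U$ against the law of $D_{k+1}$ disposes of the event $\{D_{k+1}<U\}$.

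\textbf{The event $\{D_{k+1}\geq U\}$ (main obstacle).} Its probability is a binomial tail: it is at most $P(\mathrm{Bin}(n,\Fdx(U)) \leq k)$. Since $k/n\to0$, this vanishes exponentially fast provided $\inf_{\x}\Fdx(U) > 0$. That infimum should follow from C3 at radius $\min(U,H)$; a compactness argument may also be needed. The difficulty is that on this event the ratio $D_{k+1}/D_k$ is not bounded a priori.

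My first attempt is Cauchy--Schwarz,
\begin{align*}
E\left[(D_{k+1}/D_k)^a\,1\{D_{k+1}\geq U\}\right] \leq E\left[(D_{k+1}/D_k)^{2a}\right]^{1/2} P(D_{k+1}\geq U)^{1/2},
\end{align*}
which reduces the problem to a uniform bound $E[(D_{k+1}/D_k)^{2a}] = O(\mathrm{poly}(n))$. For that bound I would use the crude estimate $(D_{k+1}/D_k)^{2a}\leq (2M)^{2a}D_k^{-2a}$. I would then bound $E[D_k^{-2a}]$ through the lower bound $\Fdx(t)\gtrsim t^p$ for small $t$, which C3 supplies after integrating the surface integrals radially. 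This gives polynomial growth at worst when $2a<pk$, and the exponentially small tail then kills it.

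If this crude route fails, the fallback is to avoid the bad event altogether. For this I would condition on $D_{k}$ and use monotonicity: the conditional law of $D_k/D_{k+1}$ given $D_{k+1}=u$ is stochastically larger for larger $u$, under the near-convexity that C4 encodes. That would allow the bound for $u\geq U$ to be transferred from the bound at $u=U$.
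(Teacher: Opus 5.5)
On the event $\{D_{k+1}<U\}$ your argument is the paper's in different form. After Fubini, the paper's joint-density computation for $u<u^*$ is exactly your conditional excess $\Fdx(u)^{-k}\int_1^\infty \Fdx(u\epsilon^{-1/a})^k\,d\epsilon$, integrated against the law of $D_{k+1}$. For general $b$ the excess is exactly $u^{b-a}$ times this. Your factorisation with $D_{k+1}^{b-a}\le(2M)^{b-a}$ replaces the paper's truncation at $u^*=\min\{1,U\}$ and its change of variables reducing $u^b$ to $u^a$, and is slightly cleaner.

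On $\{D_{k+1}\ge U\}$ you take a different route (Cauchy--Schwarz), and there is a real error in it: the key estimate is attributed to the wrong inequality.

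\textbf{Direction of the bound on $\Fdx$.} Bounding $E[D_k^{-2a}]$ from above requires $D_k$ to be unlikely to be small. Since $P(D_k\le t)=P(\mathrm{Bin}(n,\Fdx(t))\ge k)$ is increasing in $\Fdx(t)$, you need an \emph{upper} bound on $\Fdx$; a lower bound $\Fdx(t)\gtrsim t^p$ can only bound $E[D_k^{-2a}]$ from below. The correct ingredient is $\Fdx(t)\le v_0\fupper t^p$, which holds because the density is bounded (C1--C2). Exactly as in the paper's variance calculation, this gives
\begin{align*}
E[D_k^{-2a}]\le (v_0\fupper)^{2a/p}\frac{\Gamma(n+1)\Gamma(k-2a/p)}{\Gamma(k)\Gamma(n+1-2a/p)}=O\big((n/k)^{2a/p}\big)
\end{align*}
once $pk>2a$.

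\textbf{Uniform positivity of $\Fdx(U)$.} C3 does not give $\inf_{\x}\Fdx(U)>0$. Taking $h_1=h_2=h$ in C3 only yields $\Fdx(t)\gtrsim f_X(\x)\,t^{p+3/2}$, which degenerates where $f_X$ is small. Avoiding a lower bound on $f_X$ is the whole point of C3--C4. The uniform positivity must instead come from continuity of $\x\mapsto\Fdx(U)$ on the compact set $\overline{Supp(f_X)}$, which is the paper's argument.

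With these two corrections your Cauchy--Schwarz route closes: a polynomial factor is multiplied by $e^{-cn}$.

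The paper avoids negative moments altogether. It bounds the inner integral by $\int_0^\infty\min\{1,v_0\fupper u^{bp/a}\epsilon^{-p/a}\}^k d\epsilon=O(u^b)$. It then integrates this against $\frac{n!}{k!(n-k-1)!}(1-\Fdx(u))^{n-k-1}d\Fdx(u)$ over $u\ge u^*$. Here the factor $\Fdx(u)^k$ in the law of $D_{k+1}$ has already cancelled the $\Fdx(u)^{-k}$. This gives $C\binom{n}{k}(1-F_{(\x^*)}(u^*))^{n-k}\to0$ directly, which is shorter than your route.

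Your stochastic-monotonicity fallback is unnecessary. It also would not follow from C4, since C4 says nothing about $u\ge U$.
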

Notice that trivially we have
\begin{align*}
    E\left[\frac{\nnnd{\x,n}{k(n)+1}^b}{\nnnd{\x,n}{k(n)}^a}\right] - E\left[\nnnd{\x,n}{k(n)+1}^{b-a}\right] \geq 0,
\end{align*}
and so applying the lemma for $b=a=2$ we see that $E\left[\frac{\nnnd{\x,n-1}{k+1}^2}{\nnnd{\x,n-1}{k}^2}\right]$ converges uniformly to 1 as $n \to \infty$. To see that $E\left[\frac{\nnnd{\x,n-1}{k}^2}{\nnnd{\x,n-1}{k+1}^2}\right]$ also converges to one notice simply that $1 \geq E\left[\frac{\nnnd{\x,n-1}{k}^2}{\nnnd{\x,n-1}{k+1}^2}\right] = E\left[\left(\frac{\nnnd{\x,n-1}{k+1}^2}{\nnnd{\x,n-1}{k}^2}\right)^{-1}\right] \geq E\left[\frac{\nnnd{\x,n-1}{k+1}^2}{\nnnd{\x,n-1}{k}^2}\right]^{-1}$ which converges uniformly to 1. We therefore also have
\begin{align*}
    \Bigg \|E\bigg[&\left(\frac{\nnnd{1,n}{k}^2}{\nnnd{1,n}{k+1}^2} + \frac{\nnnd{1,n}{k+1}^2}{\nnnd{1,n}{k}^2}\right)\frac{1}{f_X(X_1)}\left(\nabla^2 f_X(X_1) + \frac{1}{2}\mathrm{tr}\left(\nabla^2 f_X(X_1)\right)\right)\bigg] \Bigg \|_F\\
    =& \Bigg \|\int\limits_{Supp(f_X)} \left(\nabla^2 f_X(\x) + \frac{1}{2}\mathrm{tr}\left(\nabla^2 f_X(\x)\right)\right)E\left[\frac{\nnnd{\x,n-1}{k+1}}{\nnnd{\x,n-1}{k}} + \frac{\nnnd{\x,n-1}{k}}{\nnnd{\x,n-1}{k+1}}\right] d\x \Bigg \|_F\\
    \leq & 2 \Bigg \|\int\limits_{Supp(f_X)} \left(\nabla^2 f_X(\x) + \frac{1}{2}\mathrm{tr}\left(\nabla^2 f_X(\x)\right)\right) d\x \Bigg \|_F\\
    & + \int\limits_{Supp(f_X)}\left\|\nabla^2 f_X(\x) + \frac{1}{2}\mathrm{tr}\left(\nabla^2 f_X(\x)\right)\right\|_F \Bigg| E\left[\frac{\nnnd{\x,n-1}{k+1}}{\nnnd{\x,n-1}{k}} + \frac{\nnnd{\x,n-1}{k}}{\nnnd{\x,n-1}{k+1}}\right] - 2\Bigg| d\x
\end{align*}
where the first term is exactly zero and the second term converges to zero as $n \to \infty$ since $\nabla^2 f_X(\x)$ is bounded and $E\left[\frac{\nnnd{\x,n-1}{k+1}}{\nnnd{\x,n-1}{k}} + \frac{\nnnd{\x,n-1}{k}}{\nnnd{\x,n-1}{k+1}}\right]$ converges uniformly to 2, as described. We therefore have $\lim_{n \to \infty} E[I(\X_n, k(n))] = I(f_X)$.

Now, to bound the variance of $I(\X_n, k)$ we use Lemma 4.6 of~\cite{Dafydd}, which we apply to each of the elements of $I(\X_n, k)$, giving
\begin{align*}
    Var\left(I(\X_n,k)_{i,j}\right) \leq \frac{2(n+1)(3+8k^2pv_0)}{n^2}E\left[\Phi_{X,n-1,(i,j)}^2\right],
\end{align*}
where $\Phi_{X,n-1,(i,i)} = \frac{p}{2}\left(\frac{1}{\nnnd{X,n-1}{k}^{2}} + \frac{1}{\nnnd{X,n-1}{k+1}^{2}} - \frac{p(\nnn{X,n-1}{k}-\nnn{X,n-1}{k+1})_i^2}{\nnnd{X,n-1}{k}^2\nnnd{X,n-1}{k+1}^2}\right)$ and for $i\not=j$, $\Phi_{X,n-1,(i,j)} = \frac{p^2(\nnn{X,n-1}{k}-\nnn{X,n-1}{k+1})_i(\nnn{X,n-1}{k}-\nnn{X,n-1}{k+1})_j}{2\nnnd{X,n-1}{k}^2\nnnd{X,n-1}{k+1}^2}$. Now, clearly we have, for any $i,j$, that
\begin{align*}
    |(\nnn{X,n-1}{k}-\nnn{X,n-1}{k+1})_i(\nnn{X,n-1}{k}-\nnn{X,n-1}{k+1})_j| &\leq ||\nnn{X,n-1}{k}-\nnn{X,n-1}{k+1}||^2 \leq \left(\nnnd{X,n-1}{k} + \nnnd{X,n-1}{k+1}\right)^2\\
    \Rightarrow \frac{|(\nnn{X,n-1}{k}-\nnn{X,n-1}{k+1})_i(\nnn{X,n-1}{k}-\nnn{X,n-1}{k+1})_j|}{\nnnd{X,n-1}{k}^2\nnnd{X,n-1}{k+1}^2} &\leq 2(\nnnd{X,n-1}{k}^{-2}+\nnnd{X,n-1}{k+1}^{-2}) \leq 4\nnnd{X,n-1}{k}^{-2},
\end{align*}
and hence $\Phi_{X,n-1,(i,j)}^2\leq \frac{p^2(2+4p)^2}{4}\nnnd{X,n-1}{k}^{-4}$. It is therefore sufficient that $E[\nnnd{X,n-1}{k}^{-4}] = o(\frac{n}{k^2})$ in order to ensure $Var\left(I(\X_n, k)_{i,j}\right)$ vanishes asymptotically. To see this, note that using the properties of order statistics we have
\begin{align*}
    E\left[\nnnd{\x,n-1}{k}^{-4}\right] &= \frac{n!}{(k-1)!(n-k)!}\int_0^\infty u^{-4}\Fdx(u)^{k-1}(1-\Fdx(u))^{n-k}d\Fdx(u).
\end{align*}
Now, both a bounded derivative and bounded support of $f_X$ ensures $\fupper := \sup_{\x}f_X(\x) < \infty$, and hence $\Fdx(u) \leq v_0\fupper u^p$, giving
\begin{align*}
    E\left[\nnnd{\x,n-1}{k}^{-4}\right] &\leq \frac{n!}{(k-1)!(n-k)!}(v_0\fupper)^{4/p}\int_0^\infty \Fdx(u)^{k-1-4/p}(1-\Fdx(u))^{n-k}d\Fdx(u)\\
    &= (v_0\fupper)^{4/p}\frac{n!}{(k-1)!(n-k)!} \frac{\Gamma(k-4/p)(n-k)!}{\Gamma(n-4/p)}\\
    &\leq (v_0\fupper)^{4/p}\left(\frac{n+1}{k}\right)^{4/p},
\end{align*}
where in the final step we have used Gautschi's inequality~\citep{Gautschi}. Provided $\frac{k(n)^{2-4/p}}{n^{1-4/p}} \to 0$ we have $E[\nnnd{X,n-1}{k(n)}^{-4}] = o(n/k(n)^2)$, and hence $Var\left(I(\X_n, k(n))_{i,j}\right) \to 0$ as $n \to \infty$ as required.

We have thus shown that $I(\X_n, k(n))$ is asymptotically unbiased for $I(f_X)$, and its variance vanishes as $n \to \infty$, and so $I(\X_n, k(n)) \xrightarrow[]{P}I(f_X)$, and we are done.

\section{Practicalities}\label{sec:practicalities}

As mentioned in Section~\ref{sec:intro}, when performing dimension reduction we perform a few modifications to the basic approach of projecting onto the eigenvectors of $I(\calX, k)\Sigma_\calX$. The first two are simply to improve the quality of $I(\calX, k)$ as an estimate of $I(f_X)$, whereas the third is a straightforward orthogonalisation of the eigenvectors which we have found to lead to improved results when performing downstream tasks such as clustering and outlier detection on the projected data.

\subsection{Improving Estimation}

\subsubsection{Variance Reduction}

To obtain a lower variance estimator of the DIM we simply average over multiple estimates, obtained for different values of $k$. Specifically, for $0 < k_0 \leq k_1 < n$ we define $I(\calX, k_0, k_1) := \frac{1}{k_1-k_0+1}\sum_{k=k_0}^{k_1}I(\calX, k)$. It should be clear that, provided $k_0(n)$ and $k_1(n)$ both satisfy the conditions of Theorem~\ref{thm:conv}, we have $I(\X_n, k_0(n), k_1(n)) \xrightarrow[]{P}I(f_X)$. That is, averaging over an appropriate set of values for $k$ does not affect consistency of the estimator.

\subsubsection{Bias Reduction}

To reduce the bias of $I(\calX, k_0, k_1)$ as an estimate for $I(f_X)$ we rely on the theoretical properties of $I(f_X)$ given by~\cite{DIM}. 
Since $I(\calX, k_0, k_1)$ is symmetric, we can express it via its spectral decomposition as $\U \D \U'$, where the columns of $\U$ are its eigenvectors and $\D$ is the diagonal matrix containing its eigenvalues. We then define $I(\calX, k_0, k_1)^*:= \U\D^*\U'$ where $\D^*$ is diagonal with
\begin{align*}
    \D_{ii}^* = \max\{\D_{ii}, (\u_i'\Sigma_{\calX}\u_i + \epsilon)^{-1}\},
\end{align*}
where $\u_i; i \in [p]$, the $i$-th column of $\U$ and $\epsilon$ is a small additive factor to avoid numerical issues\footnote{we set $\epsilon = 10^{-7}$ in our experiments but have found the method to be very insensitive to this setting.}. This adjustment is based on the following lemma, which describes a lower bound on the spectrum of the population level DIM.

\begin{lemma}
Let $I(f_X) = \V\Delta\V'$ be the standard spectral decomposition of $I(f_X)$. Then $\Delta_{ii} \geq \frac{1}{Var(\v_i'X)}$, where $\v_i; i \in [p]$, is the $i$-th column of $\V$.
\end{lemma}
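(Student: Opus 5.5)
The idea is to reduce the claim to a one-dimensional Cramér--Rao type inequality applied along each eigenvector. Fix $i$ and write $\v = \v_i$, so that $\Delta_{ii} = \v' I(f_X)\v = E\big[(\v'\nabla f_X(X)/f_X(X))^2\big]$. Set $s(\x) := \v'\nabla \log f_X(\x) = \v'\nabla f_X(\x)/f_X(\x)$, the directional score. Then $\Delta_{ii} = E[s(X)^2]$, and it suffices to show that $E[s(X)^2]\,Var(\v'X) \geq 1$.

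The key identity is $Cov(s(X), \v'X) = -1$. To obtain it, write
\begin{align*}
E[s(X)] = \int \v'\nabla f_X(\x)\,d\x = 0, \qquad E[s(X)\,\v'X] = \int (\v'\x)\,\v'\nabla f_X(\x)\,d\x = -\int f_X(\x)\,\|\v\|^2 d\x = -1.
\end{align*}
Both equalities come from integrating by parts (the divergence theorem applied to $f_X\v$ and to $(\v'\x)f_X\v$), using $\nabla(\v'\x) = \v$ and $\|\v\|=1$. Cauchy--Schwarz then gives
\begin{align*}
1 = Cov(s(X), \v'X)^2 \leq Var(s(X))\,Var(\v'X) = E[s(X)^2]\,Var(\v'X),
\end{align*}
which rearranges to $\Delta_{ii} \geq 1/Var(\v_i'X)$. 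The inequality is strict unless $s$ is affine in $\v'\x$.

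An alternative route is to invoke the known matrix inequality $I(f_X) \succcurlyeq \Sigma_X^{-1}$ cited in Section~\ref{sec:dim} \citep{DIM0}. This gives $\Delta_{ii} = \v'I(f_X)\v \geq \v'\Sigma_X^{-1}\v \geq (\v'\Sigma_X\v)^{-1}$, where the last step is Cauchy--Schwarz in the $\Sigma_X$ inner product: $1 = (\v'\v)^2 \leq (\v'\Sigma_X^{-1}\v)(\v'\Sigma_X\v)$. I would present this as the short proof and give the direct computation above as a self-contained justification.

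The main obstacle is justifying the integration by parts, since the boundary terms must vanish. Under C1--C2 the support is bounded, but $f_X$ need not vanish on $\partial\, Supp(f_X)$; if it jumps there, the gradient identity fails and, in the distributional sense, $I(f_X)$ is infinite. I would therefore state the standing regularity assumption that $I(f_X)$ is finite and $f_X$ is absolutely continuous with $f_X \to 0$ at the boundary of its support, as in \citet{DIM0}. Under that assumption the boundary terms in both integrations by parts are zero. If $I(f_X)$ is infinite, the bound holds trivially.
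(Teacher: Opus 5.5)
Your proof is correct, but it takes a different route from the paper, whose proof is only a one-line citation of Eqs.~(4) and~(6) of \cite{DIM}. That citation imports a population-level information inequality of the type $I(f_X)\succcurlyeq\Sigma_X^{-1}$, together with that reference's regularity conditions. Your ``alternative route'' is essentially the paper's: the Cauchy--Schwarz step $\v'\Sigma_X^{-1}\v\geq(\v'\Sigma_X\v)^{-1}$ is what turns the matrix bound into the eigenvalue bound. Your primary argument is self-contained. The identity $Cov(\v'\nabla\log f_X(X),\v'X)=-1$ plus Cauchy--Schwarz gives $\v'I(f_X)\v\geq 1/Var(\v'X)$ directly. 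This is weaker than the bound $\v'I(f_X)\v\geq\v'\Sigma_X^{-1}\v$ that the citation route delivers. It is, however, all the lemma needs, it holds for every unit $\v$, and it identifies the equality case. Most usefully, it makes the required regularity explicit, and that caveat is necessary, not pedantic. With the gradient taken on the interior of the support (as in C2 and in the definition of $I(f_X)$), the uniform density on a ball has $I(f_X)=\0$, so the lemma is false without your assumption that $f_X$ vanishes at the boundary of its support. That assumption excludes the ``density bounded away from zero on a bounded support'' setting the appendix uses to verify C3 and C4.
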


\begin{proof}
    Follows from Eq.~(4) and Eq.~(6) from \cite{DIM}.
\end{proof}

Notice also that if the eigenvalues of $I(f_X)$ are distinct then the spectral decomposition of $I(\X_n, k_0(n), k_1(n))$ converges to that of $I(f_X)$, and so for any fixed $\epsilon > 0$ the probability that a bias adjustment actually modifies $I(\X_n, k_0(n), k_1(n))$ converges to zero. As a consequence we clearly have $I(\X_n, k_0(n), k_1(n))^* \xrightarrow[]{P} I(f_X)$.



\subsection{Orthogonalisation}

The final modification is a simple orthogonalisation of the eigenvectors of $I(\calX, k_0, k_1)^*\Sigma_\calX$. The most notable implications of this are from the point of view of overall variation in the projected data, where if $I(\calX, k_0, k_1)^*\Sigma_\calX = \W\Gamma\W^{-1}$ is the eigen-decomposition of $I(\calX, k_0, k_1)^*\Sigma_\calX$ it is possible that trace$(\Sigma_{\W'\calX}) = $ trace$(\W'\Sigma_\calX \W)$ is substantially less than trace$(\Sigma_\calX)$. We therefore use as our final projection basis the columns of $\W^* = \W_\u\W_\v'$, where $\W_\u$ and $\W_\v$ contain in their columns the left and right singular vectors of $\W$, respectively.

\section{Experiments}\label{sec:experiments}

In this section we document the practical utility of the proposed approach, by exploring its use as a dimension reduction tool to aid downstream tasks such as cluster analysis and outlier detection. For comparisons we use a number of other general purpose linear dimension reduction techniques, including (i) standard PCA; (ii) robust PCA (RPCA) using the method of~\cite{robpca}; (iii) Locality Preserving Projections~\citep[LPP]{lpp}; and (iv) the kernel based DIM method (kDIM) of~\cite{DIM}. We also investigated methods designed specifically for clustering~\citep{hofmeyrPPCI} and outlier detection~\citep{dobin}, however we found the general purpose methods outperformed these even on problems for which they were designed, and so omit their results.

For RPCA we used the implementation in the {\tt R} package {\tt rrcov}~\citep{rrcov}, and when $p$ exceeds 50 we used the projection pursuit approach due to the high computational cost of computing the minimum covariance discriminant in high dimensions. For LPP we explored the implementation in the {\tt R} package {\tt Rdimtools}, but found far better performance using our own implementation using a nearest neighbour graph with $\lfloor \log(n)\rfloor$ neighbours\footnote{We found the implementation to be fairly insensitive to the number of neighbours, but performance was slightly worse for larger settings}. For kDIM we converted MATLAB code obtained from the authors for use within {\tt R}. For the proposed approach we used $k_1 = \lceil 2\log(n)\rceil$ and $k_0 = \lfloor k_1/2\rfloor$. 
Both PCA and RPCA produce orthogonal projection bases, whereas LPP and kDIM produce non-orthogonal bases. We explored both the raw and orthogonalised variants and found that for LPP the raw basis was superior, while for the kDIM method orthogonalisation improved performance. We report only the results from the superior variant of each.

Rather than attempting to separately determine an appropriate number of dimensions to extract with the different methods, we simply reduced dimensionality from $p$ to $\lceil p^{4/5}\rceil$ in all cases.



\paragraph{Preprocessing}

Distance based methods, such as those based on nearest neighbours, are highly sensitive to the scaling of individual variables relative to one another. It is therefore crucial that this is accounted for before the application of any such methods. We used a standard scaling approach of dividing each variable by the sample estimate of its standard deviation. Note that a complete whitening of the data, which results in unit variance when measured in any direction and not only along the cardinal basis directions, is a reasonable alternative. We prefer the simpler standardisation since when the data lie close to a low dimensional subspace total whitening can have the effect of inflating the noise in directions orthogonal to this subspace. Moreover, as it is discussed by~\cite{DIM0, DIM}, at a population level the eigenvectors of $I(f_X)\Sigma$ are proportional to $\Sigma^{-1/2}\mathbf{W}$, where $\mathbf{W}$ are the eigenvectors of the DIM of the whitened $\Sigma^{-1/2}X$. Again at a population level, first whitening and then projecting onto the eigenvectors of the DIM of the whitened $\Sigma^{-1/2}X$ is equivalent (up to scaling of the resulting projected variables) to projecting directly onto the eigenvectors of $I(f_X)\Sigma$.

For consistency across different methods, we standardised the data before applying any of the dimension reduction techniques.

\subsection{Estimation Accuracy}

Before exploring the utility of the proposed approach for dimension reduction, we first briefly explore the estimation accuracy of $I(\calX, k_0, k_1)^*$ in the simple case where $X$ has a Gaussian distribution, and hence where $I(f_X)$ is known to be equal to $\Sigma^{-1}$. Since $I(\calX, k_0, k_1)^*$ and $I(f_X)$ are location invariant we fix $\boldsymbol{\mu} = \0$ throughout for simplicity. To produce a variety of covariance matrices, in each experimental set-up we generate covariance matrices randomly by sampling the set of eigenvectors uniformly from the Stiefel manifold, and the set of eigenvalues independently from an Exponential$(1)$ distribution. For simple comparisons we use $\Sigma_{\calX}^{-1}$ and $\tilde\Sigma_{\calX}^{-1}$, where $\Sigma_{\calX}$ is the maximum likelihood estimator for $\Sigma$ and $\tilde\Sigma_{\calX}$ is the regularised estimator described by~\cite{schafer2005shrinkage}. For the latter we used the implementation in the {\tt R} package {\tt corpcor}~\citep{corpcor}, and the automatic selection of regularisation strength.

We vary $n$ in $\{1000, 2000\}$ and $p$ in $\{10, 20, 50, 100, 200, 500\}$. For each pair of $p$ and $n$ we generate 50 covariance matrices as described above, and a single sample of size $n$ for each, from which to perform estimation. To assess the accuracy of estimation we report the average (and standard deviation of) values of $\log\left(||\hat \tau - \tau||_F^2\right) - \log\left(||\tau||_F^2\right)$, where $\tau = \Sigma^{-1}$ and $\hat \tau$ is one of $I(\calX, k_0, k_1)^*; \Sigma_\calX^{-1}$ and $\tilde \Sigma_\calX^{-1}$. Results from these experiments are shown in Figure~\ref{fig:accuracy}, where the average accuracy from estimation using the maximum likelihood estimator is shown with \textcolor{red}{--$\scriptstyle\triangle$--}; using the regularised estimator with \textcolor{green}{--$\scriptstyle\square$--}; and using $I(\calX, k_0, k_1)^*$ with --$\circ$--. $I(\calX, k_0, k_1)^*$ shows quite competitive accuracy to using the maximum likelihood estimator for all values of $p$, and even outperforms it for larger values of $p$. The regularised variant is primarily used for large $p$ relative to $n$ scenarios, and is only included for interested readers.

\begin{figure}
    \centering
    \subfigure[$n=1000$]{\includegraphics[width=0.48\linewidth]{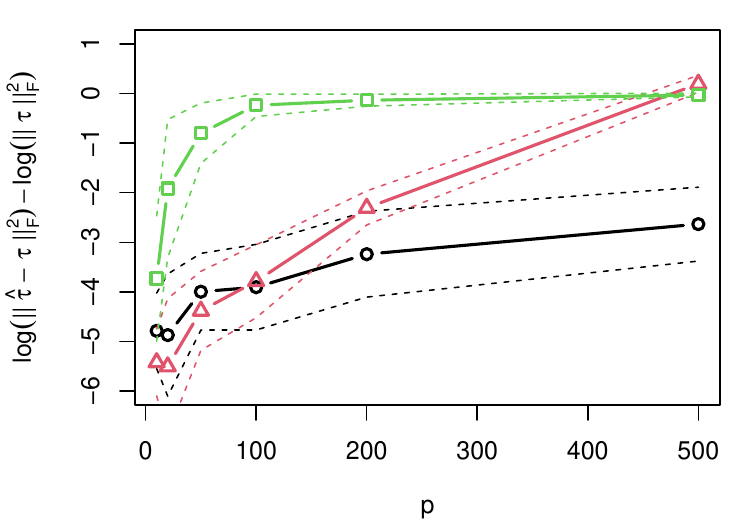}}
    \subfigure[$n=2000$]{\includegraphics[width=0.48\linewidth]{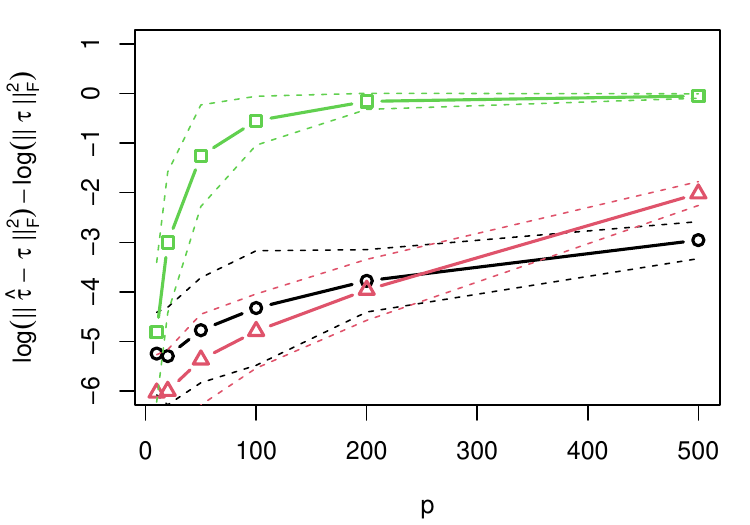}}
    \caption{Average estimation error for $I(f_X)$ when $X\sim N(\0, \tau^{-1})$ using (i) $\Sigma_\calX^{-1}$ (\textcolor{red}{--$\scriptstyle\triangle$--}); (ii) $\tilde \Sigma_\calX^{-1}$ (\textcolor{green}{--$\scriptstyle\square$--}); and (iii) $I(\calX, k_0, k_1)^*$ (--$\circ$--). The dashed lines indicate one standard error deviations from the mean.}
    \label{fig:accuracy}
\end{figure}

\subsection{Clustering}

To assess the usefulness of the proposed approach for aiding cluster analysis we applied it, plus the other general purpose dimension reduction techniques, to the 45 data sets used by~\cite{hofmeyrCNS}\footnote{two of the data sets have multiple potential groupings, and we simply treated each of these as separate clustering problems, leading to a total of 48}. These are data for which ``ground truth'' classifications of the points into groups are available, and so clustering performance can be quantified by how well the clusters align with these true groups/classes.

To perform clustering we used the ubiquitous KMeans; the hierarchical variant of DBSCAN~\citep[HDBSCAN]{campello2013density}; Spectral Clustering~\citep[SC]{ng2001spectral}; and the recently proposed Torque Clustering~\citep[TORC]{TORC}. For KMeans we used the popular KMeans++\citep{arthur2007k} initialisation and selected the number of clusters using the silhouette score~\citep{kaufman2009finding}. For HDBSCAN we used the implementation in the {\tt R} package {\tt dbscan} and simply set the threshold number of neighbours to characterise a point as a high density point to 5. For SC we used the implementation in the {\tt R} package {\tt kknn}~\citep{kknn} as it is both computationally efficient and can automatically determine an appropriate number of clusters to extract from a set of data. Finally, TORC is a fully automatic clustering method which does not require any specification of the number of clusters, nor any hyperparameters. To quantify the quality of a clustering solution we use the Adjusted Rand Index~\citep[ARI]{hubert1985comparing}. We also considered the Adjusted Mutual Information, but the results are very similar and so for brevity we report only on the ARI.

\begin{figure}
    \centering
    \subfigure[KMeans]{\includegraphics[width=0.99\linewidth]{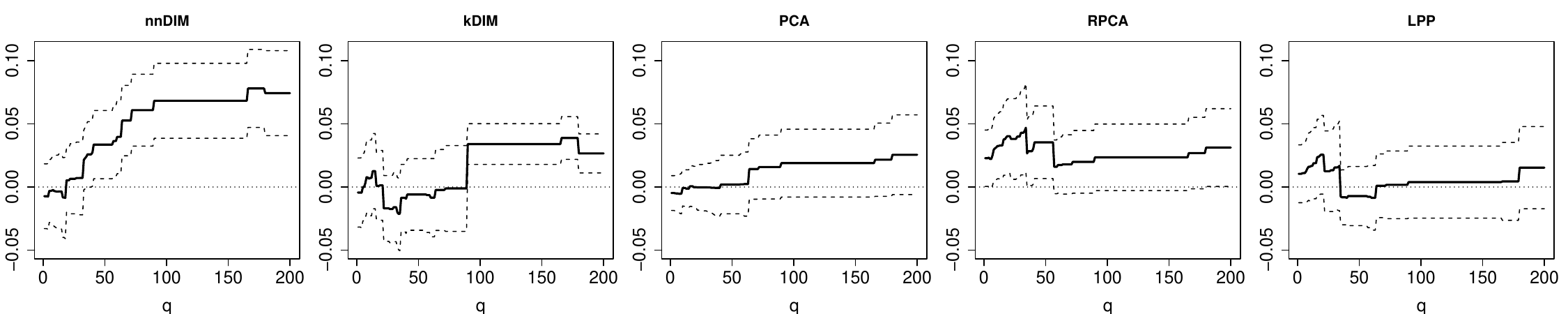}}
    \subfigure[HDBSCAN]{\includegraphics[width=0.99\linewidth]{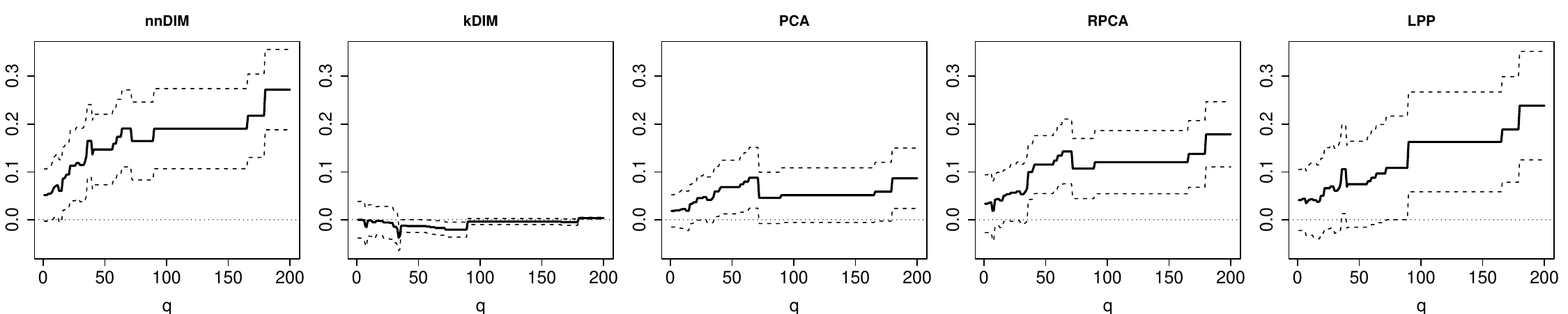}}
    \subfigure[Spectral Clustering]{\includegraphics[width=0.99\linewidth]{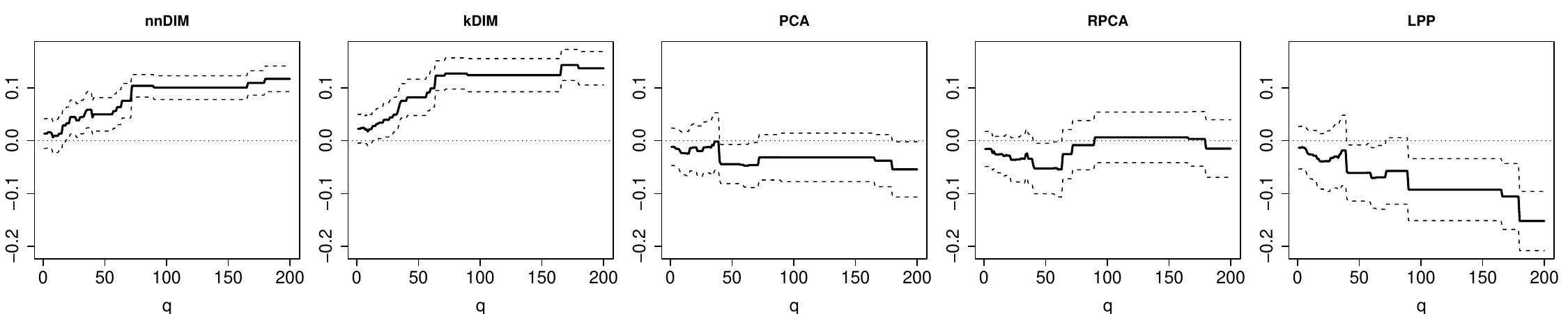}}
    \subfigure[TORC]{\includegraphics[width=0.99\linewidth]{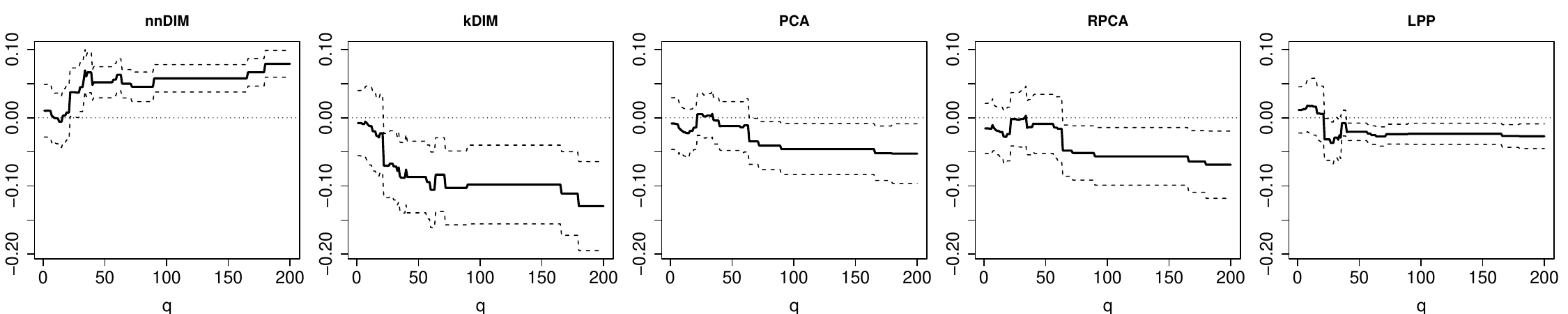}}
    \caption{Each plot shows, for each $q \in \{1, 2, ..., 200\}$, the average increase in ARI due to dimension reduction, using only the subset of data sets with dimension at least $q$. Dashed lines correspond with two standard error deviations. Note that there are six data sets with dimension at least 200, i.e. the rightmost value in each plot is an average of six ARI differences while the leftmost is an average of all 48.}
    \label{fig:ARIq}
\end{figure}

The results are summarised in Figure~\ref{fig:ARIq}, where we have used nnDIM (nearest neighbours DIM) to refer to the proposed approach. The plots in the figure show the average differences between the ARI achieved by clustering the data after dimension reduction and that achieved by clustering the raw data, when taken over those data sets above the given dimension. That is, for a given value of $q$, we filtered out data sets with dimension below $q$ and computed the average ARI difference over the remaining data sets. We then plotted these averages as a function of $q$. In particular for $q = 1$ (at the leftmost point in each plot) the average includes all data sets, whereas for $q = 200$ (at the rightmost) only the six data sets of highest dimension, i.e. those above 200 dimensions, are included in the average. 

It is noteworthy that, when filtering out the data sets of lower dimension, the proposed approach is the only one of the five which leads to an improvement in clustering quality for all four clustering methods.

\subsection{Outlier Detection}

To assess the usefulness of the proposed approach for aiding outlier detection, we applied it, plus the other dimension reduction techniques, to a large collection of data sets derived from the same set of 45 classification data sets used previously (48 after including the multiple label sets). We used the common approach of splitting the classes in a data set into ``inlier classes'' and ``outlier classes'', and then creating a data set with outliers by combining all observations in the inlier classes with a small sample of the observations from the outlier classes. For simplicity we used a fixed ratio of ``outliers'' to ``inliers'' of 1:20.

We considered three approaches to splitting the classes into inlier classes and outlier classes: (i) each class is treated, in turn, as the inlier class, and all other classes are outlier classes; (ii) all pairs of classes are treated, in turn, as inlier classes, and all remaining classes are outlier classes; and (iii) each class is treated, in turn, as the only outlier class, and all other classes are inlier classes. For (i) we used all 48 classification data sets (after including the multiple label sets), whereas due to the large number of pairs of classes in some of the data sets we excluded those with more than ten classes for (ii). We also excluded data sets with only two classes from (ii). For (iii), we included only data sets with up to five classes. The reason for this is that if the number of classes is large, then a sample from the single outlier class comprising close to 5\% of the total number of observations will better resemble a cluster than a small collection of outliers. Finally, we excluded from the resulting data sets any instances where the total number of inliers was less than 30. This resulted in 274 data sets of type (i); 545 of type (ii); and 69 of type (iii).\\
\\
For outlier ``detection'' we used the rankings based on isolation forest~\citep[IF]{liu2008isolation}; simplified Local Outlier Factors~\citep[sLOF]{schubert2014local}; and the simple nearest neighbour distance method of~\cite{angiulli2002fast} which uses the sum of the $k$ nearest neighbour distances from a point to determine its ``outlierness'' (kNNW); where we simply set $k = 10$ throughout, deliberately choosing a value different from the value used in evaluating $I(\calX, k_0, k_1)^*$ to avoid the possibility that the same setting may artificially inflate the performance of the proposed approach. In addition, to reduce dependence on the hyperparameter determining locality in sLOF, which uses nearest neighbour distances, we also average the sLOF scores from settings of $k$ in $\left\{1, 2, ..., 10\right\}$. To evaluate the performance of a ranking of points based on their ``outlierness'' we used the area under the precision-recall curve (AUPRC).

\begin{figure}[t]
    \centering
    \subfigure[kNN Weight]{\includegraphics[width=0.99\linewidth]{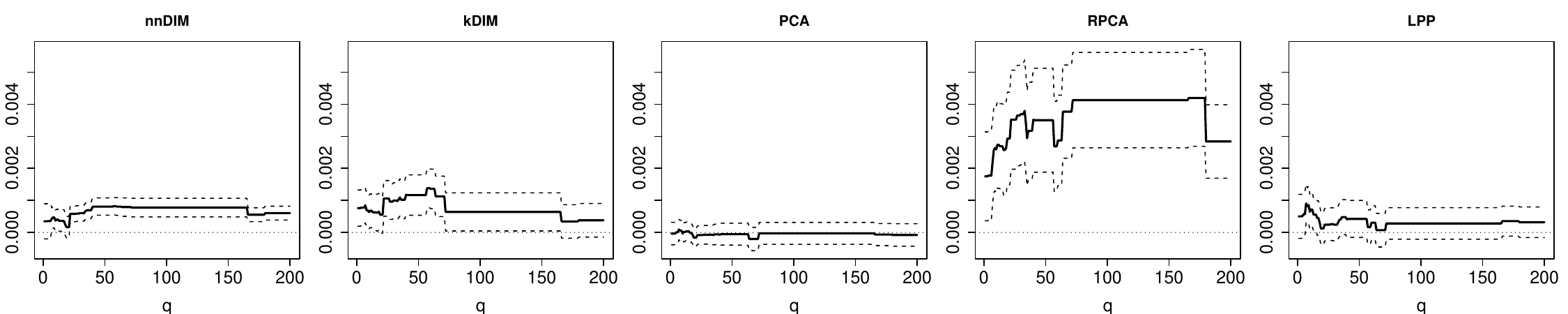}}
    \subfigure[Simplified LOF]{\includegraphics[width=0.99\linewidth]{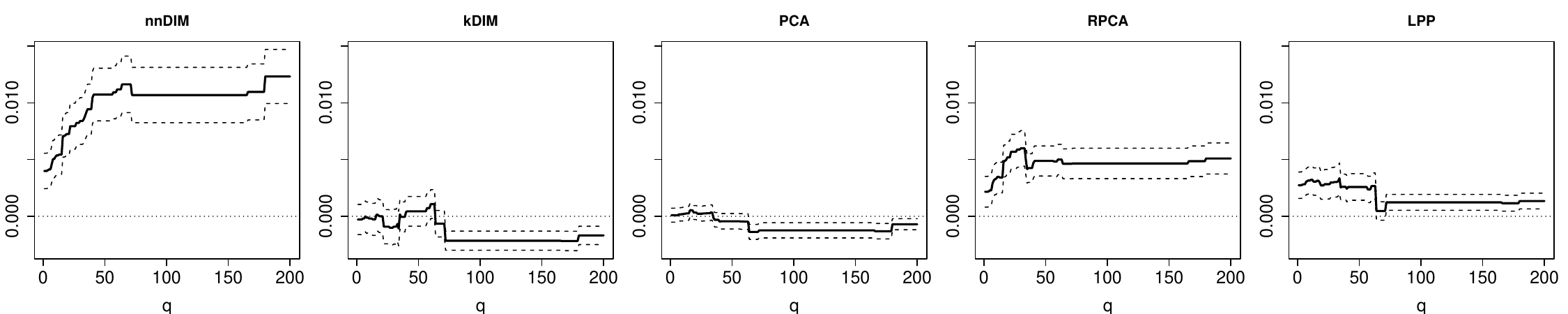}}
    \subfigure[Isolation Forest]{\includegraphics[width=0.99\linewidth]{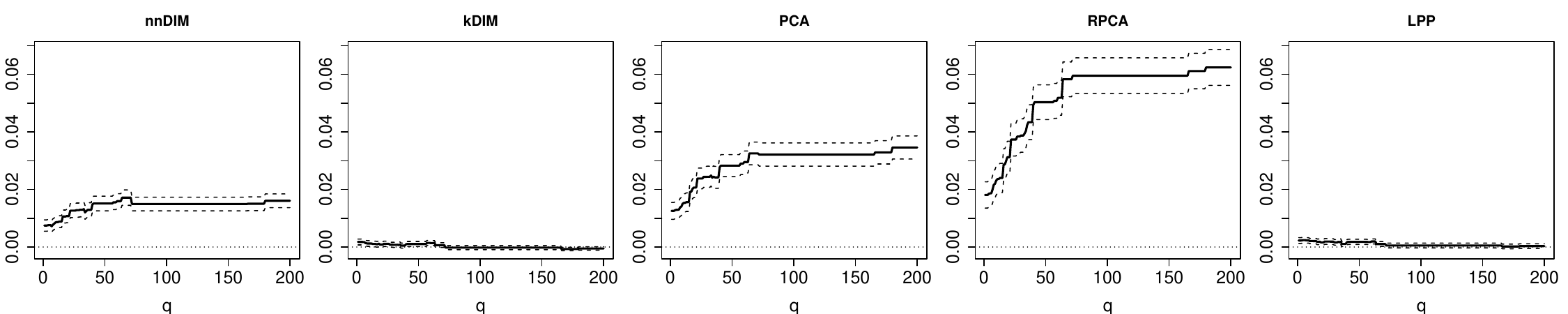}}
    \caption{Single inlier class data sets (type (i)): Average increases in AUPRC due to dimension reduction, after filtering out data sets of dimension below $q$, for $q \in \{1, 2, ..., 200\}$. Dashed lines correspond with two standard error deviations.}
    \label{fig:outlier_typei}
\end{figure}

\begin{figure}[t]
    \centering
    \subfigure[kNN Weight]{\includegraphics[width=0.99\linewidth]{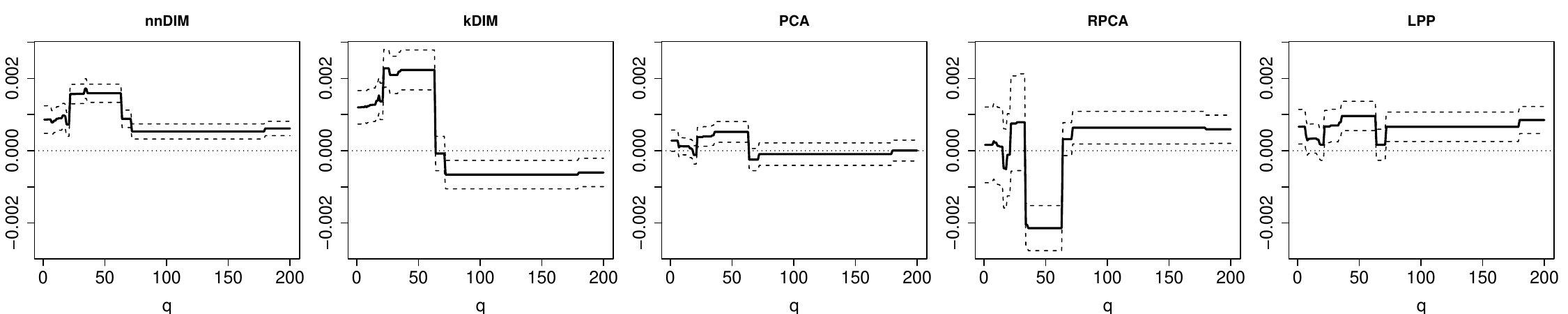}}
    \subfigure[Simplified LOF]{\includegraphics[width=0.99\linewidth]{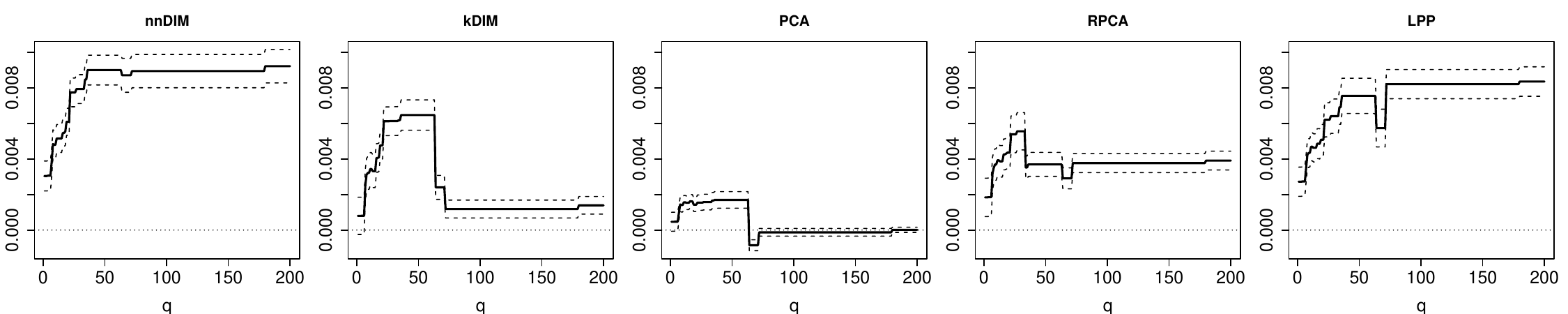}}
    \subfigure[Isolation Forest]{\includegraphics[width=0.99\linewidth]{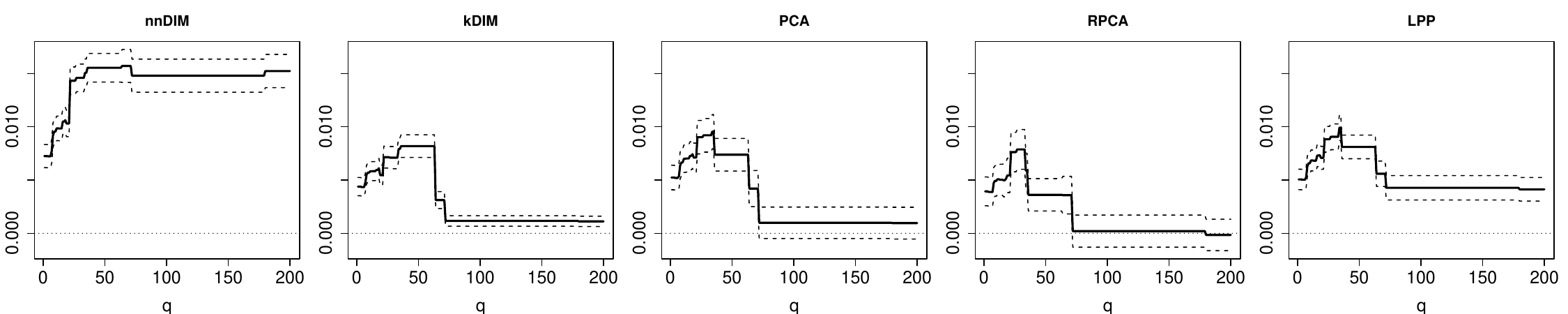}}
    \caption{Two inlier class data sets (type (ii)): Average increases in AUPRC due to dimension reduction, after filtering out data sets of dimension below $q$, for $q \in \{1, 2, ..., 200\}$. Dashed lines correspond with two standard error deviations.}
    \label{fig:outlier_typeii}
\end{figure}

The results are summarised in Figures~\ref{fig:outlier_typei}~-~\ref{fig:outlier_typeiii}, where the results are analogous to those in Figure~\ref{fig:ARIq} in that points in the plots show the average differences in performance, between results from the reduced data and those from the raw data, after filtering out data sets with dimension less than $q$, for $q$ up to 200. On data sets with a single inlier class, Figure~\ref{fig:outlier_typei}, RPCA arguably shows the best overall performance but with the proposed approach the best pairing with sLOF. With two inlier classes, Figure~\ref{fig:outlier_typeii}, the proposed approach is the best pairing for both sLOF and Isolation Forest, while it is unclear what the best pairing is with kNNW. Finally, when there is only one outlier class, Figure~\ref{fig:outlier_typeiii}, none of the results is as conclusive as in the previous cases, with the bounds of the naive confidence intervals formed by the two standard error bands typically transcending or lying close to zero. None of the methods improves performance of kNNW when compared with using the raw data, while the proposed approach shows some improvement over the raw data when combined with sLOF, but is one of only two (along with kDIM) which do not appear to offer an improvement to Isolation Forests.

\begin{figure}[t]
    \centering
    \subfigure[kNN Weight]{\includegraphics[width=0.99\linewidth]{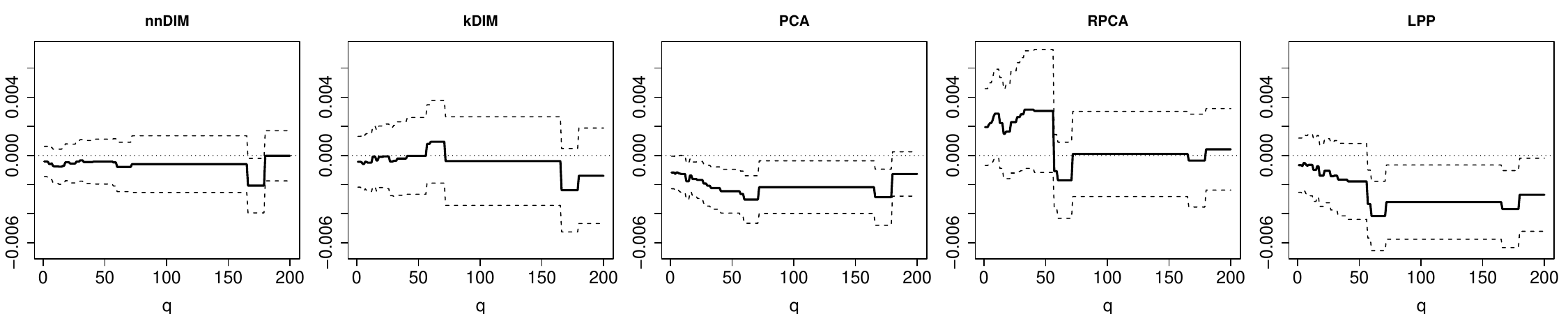}}
    \subfigure[Simplified LOF]{\includegraphics[width=0.99\linewidth]{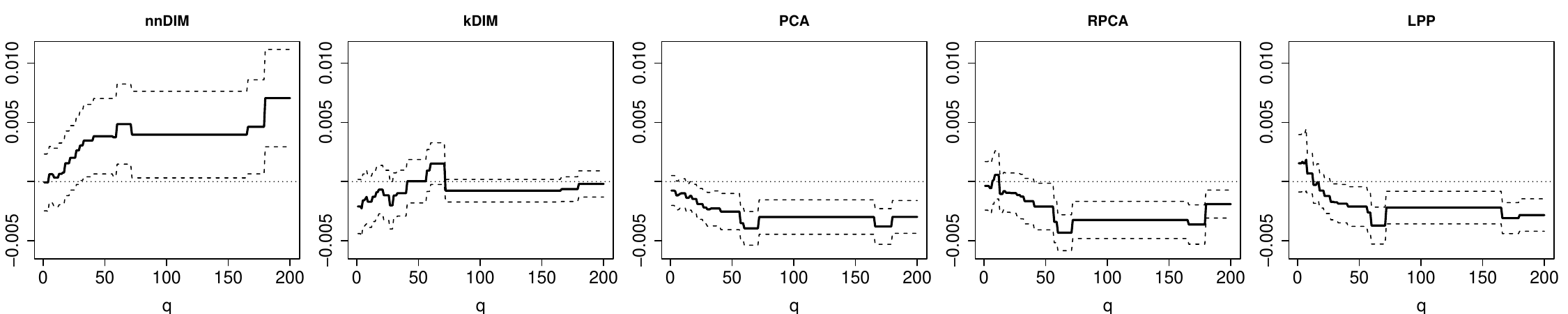}}
    \subfigure[Isolation Forest]{\includegraphics[width=0.99\linewidth]{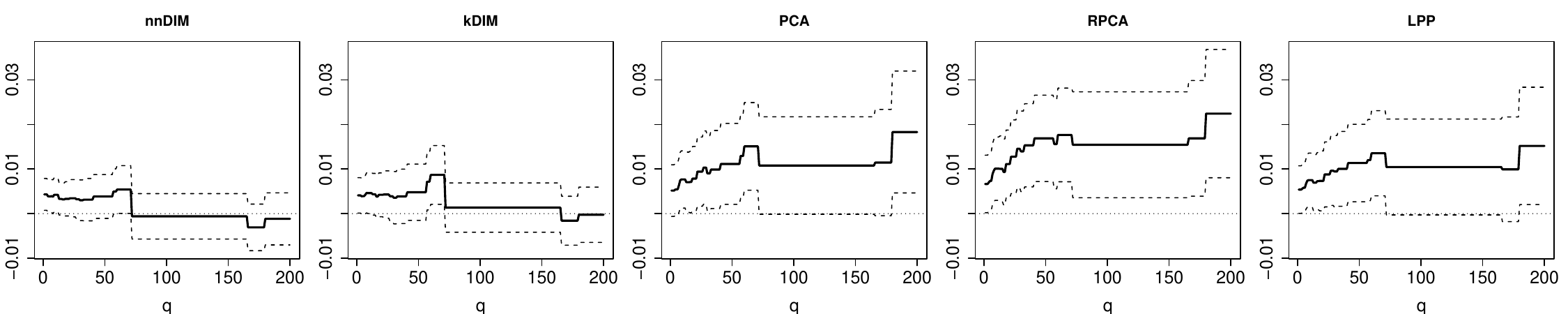}}
    \caption{Single outlier class data sets (type (iii)): Average increases in AUPRC due to dimension reduction, after filtering out data sets of dimension below $q$, for $q \in \{1, 2, ..., 200\}$. Dashed lines correspond with two standard error deviations.}
    \label{fig:outlier_typeiii}
\end{figure}




\section{Discussion and Conclusions}\label{sec:conclusions}

In this paper we introduced an intuitive linear dimension reduction technique based on enhancing the local structure in the data. Emphasising this local structure more around higher density points leads to projections with low entropy relative to overall scale. We demonstrated this theoretically by establishing an asymptotic connection with the Density Information Matrix (DIM), which itself has been connected to the task of Independent Components Analysis (ICA); a classical minimum entropy task.

Finding low entropy projections can be used to aid two of the archetypal unsupervised tasks of cluster analysis and outlier detection, due to both multimodality and long-tailedness being typical characteristics of low entropy distributions. We demonstrated the practical utility of the proposed approach in these contexts, using a large collection of publicly available data sets; where it showed superior overall performance to relevant alternative general purpose linear dimension reduction techniques from the literature.

\section*{Some Additional Proofs and Derivations}

\subsection*{Density Bounded Away from Zero Plus Support Having a Smooth Boundary Implies C3 and C4}

Suppose $\exists \flower > 0$ for which $f_X(\x) \geq \flower \ \forall \x \in Supp(f_X)$, and also that $\exists h_0, q_0 > 0$ for which $\oint_{B_h(\x)\cap Supp(f_X)} 1d\z \geq q_0a_0 h^{p-1}$ for all $\x \in Supp(f_X)$ and $0<h\leq h_0$. Note also that since the gradient and support of $f_X$ are bounded there is also a $\fupper < \infty$ for which $f_X(\x) \leq \fupper \ \forall \x \in \R^p$.

We therefore have, for any $\x \in Supp(f_X)$ and $h_1, h_2 \leq \min\{h_0, 1\}$, that
\begin{align*}
    \frac{f_X(\x)^2 h_1^{p-1}h_2^{p-1}h_2^3}{\oint_{B_{h_1}(\x)}\oint_{B_{h_2}(\x)}f_X(\z)f_X(\w)d\z d\w} &\leq \frac{\fupper^2 h_1^{p-1}h_2^{p-1}}{\oint\limits_{B_{h_1}(\x)\cap Supp(f_X)}\oint\limits_{B_{h_2}(\x) \cap Supp(f_X)}\flower^2 d\z d\w}\\
    &\leq \frac{\fupper^2}{a_0^2q_0^2 \flower^2},
\end{align*}
and hence $C_3$ holds for $l = \frac{\fupper^2}{a_0^2q_0^2 \flower^2}$ and $H = \min\{h_0, 1\}$.

To show that C4 holds, note that a bounded gradient implies that $\exists L > 0$ for which $|f_X(\x)-f_X(\x')| \leq L ||\x -  \x'||$ for all $\x, \x' \in Supp(f_X)$, and we can choose any $\alpha \in (0, 1)$, so that for all $u \in \left(0, \min\{h_0, \frac{\alpha\flower}{L}\}\right)$ and $\gamma \in (0, 1)$,
\begin{align*}
    q_0v_0(f_X(\x)-Lu) u^p \leq \Fdx(u) &\leq v_0(f_X(\x)+Lu) u^p,
\end{align*}
which gives
\begin{align*}
    \frac{\Fdx(\gamma u)}{\Fdx(u)} \leq \frac{f_X(\x)+Lu}{q_0(f_X(\x)-Lu)}\gamma^p \leq \frac{1+\alpha}{q_0(1-\alpha)} \gamma^p,
\end{align*}
with the second inequality holding since $Lu < \alpha \flower \leq \alpha f_X(\x)$. But in addition we have
\begin{align*}
    \Fdx(u)-\Fdx(\gamma u) &\geq q_0v_0(f_X(\x)-Lu)u^p(1-\gamma^p),
\end{align*}
which gives
\begin{align*}
    \frac{\Fdx(\gamma u)}{\Fdx(u)} &= 1-\frac{\Fdx(u)-\Fdx(\gamma u)}{\Fdx(u)} \leq 1-\frac{q_0(1-\alpha)}{1+\alpha}(1-\gamma^p).
\end{align*}
Combining these, we have
\begin{align*}
    \int\limits_1^\infty & \Fdx(u\epsilon^{-1/a})^k d\epsilon = \int\limits_1^{(\frac{1+\alpha}{q_0(1-\alpha)})^{a/p}} \Fdx(u\epsilon^{-1/a})^k d\epsilon + \int\limits_{(\frac{1+\alpha}{q_0(1-\alpha)})^{a/p}}^\infty \Fdx(u\epsilon^{-1/a})^k d\epsilon\\
    \leq & \Fdx(u)^k\left(\int\limits_1^{(\frac{1+\alpha}{q_0(1-\alpha)})^{a/p}}\left(1-\frac{q_0(1-\alpha)}{1+\alpha}(1-\epsilon^{-p/a})\right)^k d\epsilon + \left(\frac{1+\alpha}{q_0(1-\alpha)}\right)^k \int\limits_{(\frac{1+\alpha}{q_0(1-\alpha)})^{a/p}}^\infty \epsilon^{-pk/a}d\epsilon\right).
\end{align*}
Now, it is straightforward to show that both terms in the brackets above are $O(1/k)$, and we are done.

\subsection*{Proof of Lemma~\ref{lem:nnratio}}

We will use the identity $E[\nnnd{\x,n}{k(n)+1}^b/\nnnd{\x,n}{k(n)}^a] = \int_0^\infty P(\nnnd{\x,n}{k(n)+1}^b/\nnnd{\x,n}{k(n)}^a > \epsilon) d\epsilon$. As previously let $\Fdx$ be the distribution function of $||X -  \x||$, i.e. $\Fdx(u) = P(||X - \x|| \leq u)$. We then have, for $\epsilon > 0$ and suppressing the notational dependence of $k(n)$ on $n$, that
    \begin{align*}
        P\left(\frac{\nnnd{\x,n}{k+1}^b}{\nnnd{\x,n}{k}^a} > \epsilon \right) &= P\left(\nnnd{\x,n}{k+1}^{b-a} > \epsilon \right) + P\left(\nnnd{\x,n}{k+1}^{b-a} \leq \epsilon, \nnnd{\x,n}{k} < \frac{\nnnd{\x,n}{k+1}^{b/a}}{\epsilon^{1/a}} \right),
    \end{align*}
    and hence
    \begin{align*}
        E\left[\frac{\nnnd{\x,n}{k+1}^b}{\nnnd{\x,n}{k}^a}\right] =& \int_0^\infty P\left(\nnnd{\x,n}{k+1}^{b-a} > \epsilon \right)d\epsilon\\
        &+ \int_0^\infty P\left(\nnnd{\x,n}{k+1}^{b-a} \leq \epsilon, \nnnd{\x,n}{k} < \frac{\nnnd{\x,n}{k+1}^{b/a}}{\epsilon^{1/a}} \right)d\epsilon\\
        =& E\left[\nnnd{\x,n}{k+1}^{b-a}\right]\\
        &+ \int_0^\infty P\left(\nnnd{\x,n}{k+1}^{b-a} \leq \epsilon, \nnnd{\x,n}{k} < \frac{\nnnd{\x,n}{k+1}^{b/a}}{\epsilon^{1/a}} \right)d\epsilon,
    \end{align*}
    and so it is sufficient to show that the second integral above converges to zero uniformly in $\x$. Towards establishing this, note that, using the properties of order statistics, we have
    \begin{align*}
        P\Bigg(\nnnd{\x,n}{k+1}^{b-a} &\leq \epsilon, \nnnd{\x,n}{k} < \frac{\nnnd{\x,n}{k+1}^{b/a}}{\epsilon^{1/a}} \Bigg)\\
        &= \frac{n!\int\limits_0^{\epsilon^\frac{1}{b-a}}\int\limits_0^{\frac{u^{b/a}}{\epsilon^{1/a}}}\Fdx(z)^{k-1}(1-\Fdx(u))^{n-k-1}d\Fdx(z)d\Fdx(u)}{(k-1)!(n-k-1)!}\\
        &= \frac{n!\int\limits_0^{\epsilon^\frac{1}{b-a}}\Fdx\left(\frac{u^{b/a}}{\epsilon^{1/a}}\right)^{k}(1-\Fdx(u))^{n-k-1}d\Fdx(u)}{k!(n-k-1)!},
    \end{align*}
    and hence, letting $u^* = \min\{1, U\}$ with $U$ as in Condition C4,
    \begin{align*}
        \int_0^\infty\int_0^{\epsilon^\frac{1}{b-a}}& \Fdx\left(\frac{u^{b/a}}{\epsilon^{1/a}}\right)^{k}(1-\Fdx(u))^{n-k-1}d\Fdx(u) d\epsilon\\
        =& \int_0^\infty(1-\Fdx(u))^{n-k-1}\int_{u^{b-a}}^\infty \Fdx\left(\frac{u^{b/a}}{\epsilon^{1/a}}\right)^{k} d\epsilon d \Fdx(u)\\
        =& \int_0^{u^*}(1-\Fdx(u))^{n-k-1}\int_{u^{b-a}}^\infty \Fdx\left(\frac{u^{b/a}}{\epsilon^{1/a}}\right)^{k} d\epsilon d \Fdx(u)\\
        & + \int_{u^*}^\infty(1-\Fdx(u))^{n-k-1}\int_{u^{b-a}}^\infty \Fdx\left(\frac{u^{b/a}}{\epsilon^{1/a}}\right)^{k} d\epsilon d \Fdx(u)\\
        = & \int_0^{u^*}(1-\Fdx(u))^{n-k-1}\Fdx(u)^k\left(\frac{1}{\Fdx(u)^k}\int_{u^{b-a}}^\infty \Fdx\left(\frac{u^{b/a}}{\epsilon^{1/a}}\right)^{k} d\epsilon \right)d \Fdx(u)\\
        & + \int_{u^*}^\infty(1-\Fdx(u))^{n-k-1}\int_{u^{b-a}}^\infty \Fdx\left(\frac{u^{b/a}}{\epsilon^{1/a}}\right)^{k} d\epsilon d \Fdx(u),\\
         \leq & \sup_{0 < u < u^*}\left(\frac{1}{\Fdx(u)^k}\int_{u^{b-a}}^\infty \Fdx\left(\frac{u^{b/a}}{\epsilon^{1/a}}\right)^{k} d\epsilon \right)\int_0^{\infty}(1-\Fdx(u))^{n-k-1}\Fdx(u)^k d \Fdx(u)\\
        & + \int_{u^*}^\infty(1-\Fdx(u))^{n-k-1}\int_{u^{b-a}}^\infty \Fdx\left(\frac{u^{b/a}}{\epsilon^{1/a}}\right)^{k} d\epsilon d \Fdx(u)\\
        = & \frac{(n-k-1)!k!}{n!}\sup_{0 < u < u^*}\left(\frac{1}{\Fdx(u)^k}\int_{u^{b-a}}^\infty \Fdx\left(\frac{u^{b/a}}{\epsilon^{1/a}}\right)^{k} d\epsilon \right)\\
        & + \int_{u^*}^\infty(1-\Fdx(u))^{n-k-1}\int_{u^{b-a}}^\infty \Fdx\left(\frac{u^{b/a}}{\epsilon^{1/a}}\right)^{k} d\epsilon d \Fdx(u)
    \end{align*}
    Now, letting $\fupper$ be an upper bound on $f_X(\x)$, consider that $\Fdx(u^{b/a}\epsilon^{-1/a}) \leq \min\{1, v_0\fupper u^{bp/a}\epsilon^{-p/a}\}$ and so $\int_{u^{b-a}}\Fdx(u^{b/a}/\epsilon^{1/a})^kd\epsilon \leq \int_{0}\min\{1, v_0\fupper u^{bp/a}\epsilon^{-p/a}\}^k d\epsilon = (v_0\fupper)^{p/a}u^b \frac{pk}{pk-a}$. We therefore have
    \begin{align*}
        \int_{u^*}^\infty  (1-\Fdx(u))^{n-k-1}&\int_{u^{b-a}}^\infty \Fdx\left(\frac{u^{b/a}}{\epsilon^{1/a}}\right)^{k} d\epsilon d \Fdx(u)\\
        &\leq (v_0\fupper)^{p/a} \frac{pk}{pk-a}\int_{u^*}^\infty u^b(1-\Fdx(u))^{n-k-1} d \Fdx(u)\\
        &\leq (v_0\fupper)^{p/a} (2M)^b \frac{pk}{pk-a}\int_{u^*}^\infty (1-\Fdx(u))^{n-k-1} d \Fdx(u)\\
        &= (v_0\fupper)^{p/a} (2M)^b \frac{pk}{(pk-a)(n-k)}(1-\Fdx(u^*))^{n-k},
    \end{align*}
    where $M$ is as in Condition C1. Then, since the closure of the support of $f_X$, $\overline{Supp(f_X)}$, is compact, it must be that $\exists \x^* \in \overline{Supp(f_X)}$ for which $0 < F_{(\x^*)}(u^*) \leq \min_{\x \in Supp(f_X)}\Fdx(u^*)$, and as a result we have
    \begin{align*}
        \sup_{\x \in Supp(f_X)} \int_{u^*}^\infty  (1-\Fdx(u))^{n-k-1}&\int_{u^{b-a}}^\infty \Fdx\left(\frac{u^{b/a}}{\epsilon^{1/a}}\right)^{k} d\epsilon d \Fdx(u)\\
        &\leq C\frac{1}{n-k}\left(1-F_{(\x^*)}(u^*)\right)^{n-k},
    \end{align*}
    for some constant $C$, independent of $n, k$.

    Next observe that, for $u < u^* \leq 1$ and using the change of variables $v = u^{b/a}/\epsilon^{1/a}$, we have
    \begin{align*}
        \int_{u^{b-a}}^\infty \Fdx(u^{b/a}\epsilon^{1/a})d\epsilon &=  a u^b \int_0^u v^{-(a+1)}\Fdx(v) dv\\
        &\leq au^a \int_0^u v^{-(a+1)}\Fdx(v) dv = \int_{1}^\infty \Fdx(u\epsilon^{1/a})d\epsilon,
    \end{align*}
    and hence
    \begin{align*}
        \sup_{0 < u < u^*}\left(\frac{1}{\Fdx(u)^k}\int_{u^{b-a}}^\infty \Fdx\left(\frac{u^{b/a}}{\epsilon^{1/a}}\right)^{k} d\epsilon \right) \leq \sup_{0 < u < u^*}\left(\frac{1}{\Fdx(u)^k}\int_{1}^\infty \Fdx\left(\frac{u}{\epsilon^{1/a}}\right)^{k} d\epsilon \right).
    \end{align*}
    Combining this we have, for all $\x \in Supp(f_X)$
    \begin{align*}
        E\left[\frac{\nnnd{\x,n}{k(n)+1}^b}{\nnnd{\x,n}{k(n)}^a}\right]& - E\left[\nnnd{\x,n}{k(n)+1}^{b-a}\right] \leq \frac{n!}{k!(n-k-1)!}\Bigg(C\frac{1}{n-k}\left(1-F_{(\x^*)}(u^*)\right)^{n-k}\\
        & + \frac{(n-k-1)!k!}{n!}\sup_{0 < u < u^*}\left(\frac{1}{\Fdx(u)^k}\int_{1}^\infty \Fdx\left(\frac{u}{\epsilon^{1/a}}\right)^{k} d\epsilon \right)\Bigg)\\
        &= C{n \choose k}\left(1-F_{(\x^*)}(u^*)\right)^{n-k} + \sup_{0 < u < u^*}\left(\frac{1}{\Fdx(u)^k}\int_{1}^\infty \Fdx\left(\frac{u}{\epsilon^{1/a}}\right)^{k} d\epsilon \right),
    \end{align*}
    where the first term is independent of $\x$ and clearly converges to zero as $n \to \infty$ provided $k/n \to 0$, and the second term converges to zero uniformly in $\x$ by Condition C4 since $k \to \infty$ as $n \to \infty$.

\end{document}